\documentclass[a4paper]{article}

\usepackage[utf8]{inputenc}
\usepackage[T1]{fontenc}

\usepackage[a4paper,margin=1in]{geometry}

\usepackage[numbers,square]{natbib}
\usepackage{url}
\usepackage{booktabs}
\usepackage{graphicx}
\usepackage{microtype}

\usepackage[usenames,dvipsnames]{xcolor}
\definecolor{DarkRed}{rgb}{0.368,0.097,0.078}
\definecolor{DarkBlue}{rgb}{0.2,0.2,0.6}

\usepackage[hypertexnames = false,
    colorlinks = true,
    linkcolor = blue,
    anchorcolor = blue,
    citecolor = blue,
    filecolor = blue,
    urlcolor = DarkRed]{hyperref}

\usepackage{amsthm} %
\usepackage{mathtools,thmtools}
\usepackage{amsfonts,amsmath,amssymb}
\usepackage[capitalise]{cleveref}
\usepackage{times}
\usepackage[ruled]{algorithm2e}
\usepackage{thm-restate}
\usepackage{tcolorbox}
\usepackage{lipsum}
\usepackage{enumitem}
\usepackage{bm}
\usepackage{xspace}
\usepackage{nicefrac}
\usepackage{dsfont}
\usepackage{float}

\usepackage{bbm}
\usepackage{mathtools,thmtools}

\declaretheoremstyle[
	    spaceabove=\topsep, 
	    spacebelow=\topsep, 
	    headfont=\normalfont\bfseries,
	    bodyfont=\normalfont\itshape,
	    notefont=\normalfont\bfseries,
	    notebraces={(}{)},
	    postheadspace=0.5em, 
	    headpunct={},
	    postfoothook=\ignorespaces
    ]{theorem}
\declaretheorem[style=theorem,numberwithin=section]{theorem}

\declaretheoremstyle[
	    spaceabove=\topsep, 
	    spacebelow=\topsep, 
	    headfont=\normalfont\bfseries,
	    bodyfont=\normalfont,
	    notefont=\normalfont\bfseries,
	    notebraces={(}{)},
	    postheadspace=0.5em, 
	    headpunct={},
	    postfoothook=\ignorespaces
    ]{definition}

\declaretheoremstyle[
        spaceabove=\topsep, 
        spacebelow=\topsep, 
        headfont=\normalfont\bfseries,
        bodyfont=\normalfont,
        notefont=\normalfont\bfseries,
        notebraces={}{},
        postheadspace=0.5em, 
        qed=$\blacksquare$, 
        headpunct={},
        postfoothook=\ignorespaces
    ]{proofstyle}
\declaretheorem[style=proofstyle,numbered=no,name=Proof]{proof}

\declaretheorem[style=theorem,sibling=theorem,name=Lemma]{lemma}
\declaretheorem[style=theorem,sibling=theorem,name=Corollary]{corollary}

\declaretheorem[style=theorem,sibling=theorem,name=Fact]{fact}

\declaretheorem[style=theorem,numbered=no,name=Theorem]{theorem*}
\declaretheorem[style=theorem,numbered=no,name=Lemma]{lemma*}
\declaretheorem[style=theorem,numbered=no,name=Corollary]{corollary*}
\declaretheorem[style=theorem,numbered=no,name=Proposition]{proposition*}
\declaretheorem[style=theorem,numbered=no,name=Claim]{claim*}
\declaretheorem[style=theorem,numbered=no,name=Fact]{fact*}
\declaretheorem[style=theorem,numbered=no,name=Observation]{observation*}
\declaretheorem[style=theorem,numbered=no,name=Conjecture]{conjecture*}

\declaretheorem[style=definition,sibling=theorem,name=Remark]{remark}

\declaretheorem[style=definition,numbered=no,name=Definition]{definition*}
\declaretheorem[style=definition,numbered=no,name=Remark]{remark*}
\declaretheorem[style=definition,numbered=no,name=Example]{example*}
\declaretheorem[style=definition,numbered=no,name=Question]{question*}

\DeclareMathOperator{\E}{\mathbb{E}}

\DeclareMathOperator{\tr}{tr}

\newcommand{\ceil}[1]{\lceil #1 \rceil}

\newcommand{\ind}[1]{\mathbb{I}\set{#1}}

\newcommand{\cA}{\mathcal{A}}
\newcommand{\cB}{\mathcal{B}}

\newcommand{\cE}{\mathcal{E}}
\newcommand{\cF}{\mathcal{F}}
\newcommand{\cG}{\mathcal{G}}
\newcommand{\cH}{\mathcal{H}}

\newcommand{\cN}{\mathcal{N}}

\newcommand{\cS}{\mathcal{S}}

\let\papernewcommand\newcommand
\let\newcommand\providecommand

\newcommand{\iid}{\overset{\mathrm{iid}}{\sim}}

\newcommand{\rbr}[1]{\left(#1\right)}
\newcommand{\sbr}[1]{\left[#1\right]}
\newcommand{\cbr}[1]{\left\{#1\right\}}

\newcommand{\abr}[1]{\left|#1\right|}

\newcommand{\pair}[1]{\langle{#1}\rangle} %

\newcommand{\E}{\mathbb{E}}
\renewcommand{\P}{\mathbb P}
\newcommand{\ind}{\mathbb I}

\newcommand{\tnorm}[1]{\ensuremath{\lVert #1 \rVert}}

\newcommand{\ceil}[1]{\left\lceil\, {#1}\,\right\rceil}

\newcommand{\tsum}{\textstyle\sum}

\newcommand\N{\mathbb N}
\newcommand\R{\mathbb R}

\newcommand{\marginlabel}[1]%
{\mbox{}\marginpar{\it{\raggedleft\hspace{0pt}#1}}}

\newlength{\pgmtab}  %
 {
	\begin{enumerate}}{\end{enumerate}}

\def\qedsketch{\ifmmode\Box\else{\unskip\nobreak\hfil
\penalty50\hskip1em\null\nobreak\hfil$\Box$
\parfillskip=0pt\finalhyphendemerits=0\endgraf}\fi}

\newlength{\tpush}
\newcommand{\handout}[5]{
   
   \begin{center}
   \framebox{ \vbox{ \hbox to \textwidth { {\bf \coursenum\ :\  \coursename} \hfill #5 }
       \vspace{3mm}
       \hbox to \textwidth { {\Large \hfill #2  \hfill} }
       \vspace{1mm}
       \hbox to \textwidth { {\it #3 \hfill #4} }
     }
   }
   \end{center}
   \vspace*{4mm}
   \newcommand{\lecturenum}{#1}
   \addcontentsline{toc}{chapter}{Lecture #1 -- #2}
}

\newcommand{\normop}[1]{{\left\vert\kern-0.25ex\left\vert\kern-0.25ex\left\vert #1 
		\right\vert\kern-0.25ex\right\vert\kern-0.25ex\right\vert}}

\newcommand{\cA}{\mathcal{A}}
\newcommand{\cB}{\mathcal{B}}

\newcommand{\cE}{\mathcal{E}}
\newcommand{\cF}{\mathcal{F}}
\newcommand{\cG}{\mathcal{G}}
\newcommand{\cH}{\mathcal{H}}

\newcommand{\cN}{\mathcal{N}}

\newcommand{\cS}{{\mathcal{S}}}

\newcommand{\hty}{\widehat{y}}

\newcommand{\myle}[1]{\stackrel{\text{(#1)}}{\le}}

\newcommand{\tr}{\mathrm{tr}}

\let\newcommand\papernewcommand

\newcommand{\smp}{\tau}

\newcommand{\widebar}{\overline}
\newcommand{\spn}{\mathrm{span}}

\newcommand{\Ges}{G}
\newcommand{\Gges}{G}
\newcommand{\Gr}{\mathrm{Gr}}

\renewcommand{\ind}{\mathbb I}

\newcommand{\restatetheorem}[2]{
\begingroup
\def\thetheorem{\ref{#1}}
\begin{theorem}[Restated]
#2
\end{theorem}
\addtocounter{theorem}{-1}
\endgroup
}

\newcommand{\restatelemma}[2]{
\begingroup
\renewcommand{\thelemma}{\ref{#1}}
\begin{lemma}[Restated]
#2
\end{lemma}
\addtocounter{theorem}{-1}
\endgroup
}

\newcommand{\AuthorBlock}[3]{%
  \begingroup
  \renewcommand{\arraystretch}{0.85}%
  \begin{tabular}[t]{@{}c@{}}%
  {\normalsize\textbf{#1}}\\
  {\small #2}\\
  {\small\texttt{#3}}%
  \end{tabular}%
  \endgroup
}

\providecommand{\And}{}
\renewcommand{\And}{\quad}

\makeatletter
\renewenvironment{abstract}{%
  \if@twocolumn
    \section*{\abstractname}%
  \else
    \small
    \begin{center}%
      {\normalfont\fontsize{13pt}{15pt}\selectfont\bfseries \abstractname\par}%
      \vspace{0.0cm}%
    \end{center}%
    \quotation
  \fi
}{%
  \if@twocolumn\else\endquotation\fi
}
\makeatother

\title{Adaptively Incorporating Directional Hints\\ into Zeroth-Order Optimization}

\author{%
\AuthorBlock{Alexander Ryabchenko}
            {University of Toronto and Vector Institute}
            {alex.rbch.research@gmail.com}
\And
\AuthorBlock{Jian Qian}
            {University of Hong Kong}
            {jianqian@hku.hk}
\And
\AuthorBlock{Wenlong Mou}
            {University of Toronto and Vector Institute}
            {wenlong.mou@utoronto.ca}
}
\date{}

\begin{document}
\maketitle

\begin{abstract}
\noindent We study zeroth-order optimization of non-convex functions with the aid of directional hints, which are cheap but potentially inaccurate approximations of the true gradient direction, given by linear subspaces at each iteration. To leverage these hints adaptively while maintaining robustness to their quality, we introduce Control-Variate Zeroth-Order Descent (CV-ZOD), a new framework that refines the classical zeroth-order gradient estimator with a control variate that can be set based on the directional hints. We first show that the oracle algorithm that optimally sets the reference vector and step size at each iteration achieves a convergence rate that interpolates between the first-order $O(1/T)$ rate and the zeroth-order $O(d/T)$ rate, depending on the quality of the hints along the trajectory. We then develop a practical variant of CV-ZOD that achieves the same oracle guarantee up to logarithmic factors, without any prior knowledge of the hint quality. We validate the method empirically on simulation-based scientific optimization tasks, demonstrating sustained progress on nonconvex landscapes where zeroth-order descent is slower and existing guided methods stall as guidance deteriorates.
\end{abstract}

\section{Introduction}

Many real-world optimization problems involve non-convex objectives whose gradients are unavailable or prohibitively expensive to compute. In scientific applications, the objective may be defined by a simulator or experimental pipeline that cannot be differentiated end-to-end \citep{larson2019derivative}. In machine learning, the model itself may be accessible only through an API, as in black-box adversarial attacks on deployed classifiers \citep{chen2017zoo,ilyas2018black}, prompt optimization for API-gated language models \citep{zhan2024unlocking}, or memory-efficient fine-tuning of large models, where backpropagation is replaced by forward-pass-only updates \citep{malladi2023finetuning}. Zeroth-order methods, which rely only on function evaluations, are a natural tool in these settings. However, their convergence rates typically scale with the dimension $d$, making them impractical in high-dimensional problems.

At the same time, auxiliary information about the objective is often available. A pretrained surrogate, a smaller related model, or a low-rank approximation may suggest promising descent directions. For example, the gradient of a distilled model can provide a cheap proxy for the gradient of a large target model in prompt optimization, while a differentiable foundation model can guide black-box optimization in scientific experiments. Using such information effectively, however, is delicate. Blindly following the suggested directions can lead to rapid early progress when they are well aligned with the true gradient, but the resulting bias may ultimately limit convergence. Conversely, ignoring this information altogether reverts to the standard dimension-dependent cost of zeroth-order optimization. Ideally, we would like to use directional hints when they are informative and adaptively fall back to unbiased zeroth-order descent when they are not, without any prior knowledge of their quality along the trajectory.

The idea of incorporating prior directional information into zeroth-order optimization has been explored in several works \citep{maheswaranathan2019guided, cheng2021convergence}. These methods bias the search distribution toward the suggested directions, which can accelerate convergence when the guidance is well aligned with the true gradient. However, this modification also biases the resulting gradient estimators: they no longer target the gradient of the original objective, but rather a reweighted version of it. Consequently, the optimization dynamics are distorted by the guidance, which can limit progress. Moreover, existing convergence analyses for these approaches are restricted to convex objectives. This motivates the central question: \emph{Can we incorporate directional hints into non-convex zeroth-order optimization in a way that adapts to their quality along the trajectory without altering the underlying optimization dynamics?}

In this paper, we answer this question affirmatively. We study a general interaction model in which the hints are revealed as low-dimensional linear subspaces at each iteration, allowing different sources of directional information to be treated within a common framework. Our contributions are as follows:
\begin{enumerate}[before=\vspace{-0.2cm},after=\vspace{-0.2cm},leftmargin=0.7cm,itemsep=0.05cm]
\item \textbf{Control-Variate Zeroth-Order Descent (CV-ZOD) framework (Section~\ref{sec:gradient-estimation}).} 
We introduce a new zeroth-order descent scheme, CV-ZOD, built on a control-variate gradient estimator \citep{arisaka2024accelerating, owen2013monte,greensmith2004variance} that uses a reference vector to reduce variance while remaining unbiased for the smoothed gradient. The variance of the estimator is governed by the accuracy of the reference vector relative to the true gradient. By instantiating the reference vector through projection of the true gradient onto the hint subspace and choosing the optimal stepsize, the resulting oracle algorithm interpolates between the first-order $O(1/T)$ rate when the hints are well-aligned and the zeroth-order $O(d/T)$ rate when they are uninformative.
\item \textbf{A fully zeroth-order adaptive instantiation (Section~\ref{sec:results}).}
In practice, the oracle information of projected gradient and optimal stepsize is unavailable. We develop a practical variant of CV-ZOD that estimates these quantities using only function evaluations, and show that it achieves the same adaptive convergence guarantee, up to logarithmic factors, with $O(k+\log T)$ queries per iteration. As a result, the adaptive version of CV-ZOD can achieve the interpolating convergence rates without any prior knowledge of the hint quality along the trajectory.
\item \textbf{Simulation-based scientific optimization tasks (Section~\ref{sec:simulations}).}
We validate the framework on scientific optimization tasks arising from fluid dynamics and computational chemistry, using cheap surrogate models to generate directional hints. We show that CV-ZOD achieves significant speedup over existing zeroth-order methods, and much lower final error than existing guided methods that rely on biased gradient estimation.
\end{enumerate}

\paragraph{Related work.} 

Prior theoretical work has explored incorporating surrogate gradient information into zeroth-order optimization, but the existing analyses have so far been limited to convex objectives and relied on biased gradient estimators that reshape the optimization dynamics around the surrogate; the resulting algorithms do not follow the gradient flow of $f$, but one shaped by the surrogate. Specifically, \citep{maheswaranathan2019guided} introduce Guided Evolutionary Strategies (GES), which bias the search distribution along the guiding subspace and analyze the bias-variance tradeoff of the resulting estimator, but provide no convergence guarantees. \citep{cheng2021convergence} analyze the Prior-Guided Random Gradient-Free (PRGF) method, which uses a similar biased gradient estimator, and obtain convergence rates that improve with the cosine similarity between the prior and the true gradient; however, their analysis relies essentially on convexity and a bounded level set assumption, precluding a direct extension to non-convex objectives. See Appendix~\ref{app:related-work} for a more detailed discussion.

A complementary approach in scientific optimization is Bayesian optimization, which selects experiments through a probabilistic model of objective values. Applications include reaction optimization \citep{shields2021bayesian} and the discovery of improved photocatalyst formulations \citep{burger2020mobile}. CV-ZOD instead uses directional hints to reduce the variance of a zeroth-order gradient estimator while retaining its smoothed-gradient expectation. The black-box machine-learning applications discussed above provide further settings in which cheap surrogate gradients may supply such hints.

\section{Problem Setting}\label{sec:setting}

This paper considers the unconstrained minimization problem
\begin{align*}
    f^* := \min_{x \in \R^d} f(x),
\end{align*}
where $d \in \N$ is the dimension and $f\colon \R^d \to \R$ is the differentiable objective. Throughout, $f$ is taken to be $B$-Lipschitz, i.e., $\|\nabla f(x)\| \le B$ for all $x \in \R^d$, 
and $L$-smooth, i.e.,
\begin{align*}
    \bigl|f(y)-f(x)-\pair{\nabla f(x),\,y-x}\bigr|
    \le \tfrac{L}{2}\tnorm{y-x}^2 \qquad \text{for all }x,y \in \R^d.
\end{align*}
We study iterative algorithms that access $f$ only through a \emph{zeroth-order oracle}, which returns the scalar value $f(x)$ for any query $x \in \R^d$, and that, at each iteration, are provided with a \emph{directional hint} in the form of a low-dimensional linear subspace of $\R^d$. Figure~\ref{fig:setting} depicts the resulting interaction.

Formally, the algorithm is initialized at an arbitrary point $x_1 \in \R^d$ and runs for $T \in \N$ iterations. At iteration $t \in [T]$, the environment reveals a hint subspace $\cS_t \in \Gr_k(\R^d)$, where $\Gr_k(\R^d)$ denotes the set of $k$-dimensional linear subspaces of $\R^d$. The algorithm then issues a batch of zeroth-order queries and selects the next iterate $x_{t+1}$. After $T$ iterations, it returns an output point $\bar x \in \R^d$ based on the information collected during the interaction.
For the descent schemes considered below, every step size $\alpha_t$ is strictly positive.

\begin{figure}[H]
\centering
\fbox{\parbox{0.84\columnwidth}{
    \textbf{Zeroth-Order Optimization with Directional Hints}
    \vspace{0.1cm}

    \textit{Parameters}: horizon $T$, hint subspace dimension $k\in [d]$.\\
    \textit{Input}: initial point $x_1 \in \R^d$.

    \vspace{0.15cm}
    For each iteration $t = 1,2,\ldots,T$:
    \begin{enumerate}[leftmargin=0.75cm,noitemsep,before=\vspace{-0.25cm},after=\vspace{-0.125cm}]
        \item The environment reveals a hint subspace $\cS_t \in \Gr_{k}(\R^d)$.
        \item The algorithm makes queries to the zeroth-order oracle and selects the next iterate $x_{t+1}$.
    \end{enumerate}

    \textit{Output}: point $\bar x \in \R^d$ selected using the information collected over the $T$ iterations. 
}}
\caption{Zeroth-order optimization with directional hints.}
\label{fig:setting}
\end{figure}

Performance is measured by the \emph{stationarity of the output}: specifically, we seek to control $\E[\tnorm{\nabla f(\bar x)}^2]$. Here, the expectation is with respect to all randomness that can affect the output point $\bar x$: the algorithm's internal randomization and the hint-generation process that generates hints $(\cS_t)_{t=1}^T$, as specified in the next paragraph.

\paragraph{Hint generation model.}
We impose no distributional or structural assumptions on the mechanism that produces hint subspaces. The sequence $(\cS_t)_{t=1}^T$ may be deterministic or random, and may be chosen adaptively, subject only to a \emph{non-anticipation} condition. Formally, let $(\cF_t)_{t=1}^T$ be a filtration such that $\cF_t$ captures all randomness available before iteration $t$, including the iterate $x_t$. We require only that $\cS_t$ be $\cF_t$-measurable. Equivalently, the hint revealed at iteration $t$ may depend on the current iterate and all past information, but not on the algorithm's internal randomness within the same iteration. For example, given access to differentiable surrogates $f'_1, \ldots, f'_k$, our framework accommodates the natural hint choice $\cS_t = \spn\{\nabla f'_1(x_t), \ldots, \nabla f'_k(x_t)\}$.

We quantify the quality of the hint $\cS_t$ via the \emph{principal angle} between $\cS_t$ and the current gradient, defined as
\begin{align*}
    \theta_t
    :=
    \angle(\nabla f(x_t), \cS_t)
    =
    \arccos\rbr{\tfrac{\tnorm{P_{\cS_t}\nabla f(x_t)}}{\tnorm{\nabla f(x_t)}}},
\end{align*}
with the convention $\theta_t := 0$ when $\nabla f(x_t) = 0$. The angle $\theta_t$ measures how much of the current gradient is captured by $\cS_t$. The sequence $(\theta_t)_{t=1}^T$ is itself a stochastic process adapted to $(\cF_t)_{t=1}^T$, and our convergence guarantees are stated in terms of its realized distribution along the trajectory rather than any worst-case bound.

\paragraph{Additional notation.}
We write $[n] := \{1, \ldots, n\}$ for $n \in \N$. For a subspace $\cS \subseteq \R^d$, we denote by $P_{\cS}$ the orthogonal projector onto $\cS$ and by $\cS^\perp$ its orthogonal complement, with the shorthand $P_{\cS}^{\perp} := P_{\cS^{\perp}}$. For vectors $v_1, \ldots, v_m \in \R^d$, we write $\spn\{v_1, \ldots, v_m\}$ for their linear span. For $k \in \{0, \ldots, d\}$, we denote by $\Gr_k(\R^d)$ the Grassmannian of $k$-dimensional linear subspaces of $\R^d$, with $\Gr_0(\R^d)$ consisting of the trivial subspace $\{0\}$ alone.

\section{Control-Variate Framework for Zeroth-Order Descent}\label{sec:gradient-estimation}

This section introduces the Control-Variate Zeroth-Order Descent (CV-ZOD) framework. Section~\ref{sec:cv-estimator} constructs a gradient estimator \eqref{eq:cv-estimator} that, given an arbitrary reference vector, refines it into an unbiased estimate of the smoothed gradient with variance controlled by how close the reference vector is to the true gradient. Section~\ref{sec:cv-zod} embeds this estimator into an iterative descent scheme in which a fresh reference vector and step size are chosen at each iteration, and establishes a convergence guarantee (Theorem~\ref{thm:main-theorem}) governed by the interplay between the two along the trajectory. The framework is agnostic to how reference vectors and step sizes are chosen; Section~\ref{sec:oracle-choices} specializes it to directional hints and identifies the oracle choices that serve as a benchmark for Section~\ref{sec:results}, where we match this benchmark using only function evaluations.

\subsection{Gradient Estimation with Control Variates}\label{sec:cv-estimator}

We recall standard zeroth-order tools \citep{ghadimi2013stochastic, nesterov2017random, balasubramanian2019} before introducing our control-variate estimator.

\paragraph{Preliminaries: Gaussian smoothing and the classical gradient estimator.}

For a smoothing parameter $\smp > 0$, we define the $\smp$-smoothed objective by
\begin{align*}
    f^\smp(x)
    :=
    \E_{u \sim \cN(0, I_d)}\sbr{f(x + \smp u)}
    \qquad \text{for all }x \in \R^d.
\end{align*}

The following lemma records the basic approximation properties of this Gaussian smoothing.
\begin{lemma}[Theorems 1-3 in \citep{nesterov2017random}]
\label{lem:gaussian-smoothing}
The function $f^\smp$ is differentiable, $L$-smooth, and, for all $x \in \R^d$, satisfies
\begin{align*}
    |f^\smp(x)-f(x)|
    &\le \frac{\smp^2}{2} Ld \qquad \text{and} \qquad
    \tnorm{\nabla f^\smp(x)-\nabla f(x)}
    \le \frac{\smp}{2} L(d+3)^{3/2}.
\end{align*}
Moreover, for all $x \in \R^d$, $\nabla f^\smp(x)
    =
    \E_{u \sim \cN(0, I_d)}
    \big[\tfrac{f(x + \smp u)}{\smp}\, u\big].$
\end{lemma}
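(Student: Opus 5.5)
We establish the four claims of the lemma (differentiability, $L$-smoothness, the value bound, the gradient bound) together with the Stein-type gradient formula. The common starting point is to write $f^\smp$ as a convolution with the Gaussian density $\varphi_\smp(z) = (2\pi\smp^2)^{-d/2}e^{-\tnorm{z}^2/(2\smp^2)}$:
\begin{align*}
f^\smp(x) = \int f(x+z)\varphi_\smp(z)\,dz = \int f(y)\varphi_\smp(y-x)\,dy .
\end{align*}

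\textbf{Differentiability and the gradient formula.} Since $f$ is $B$-Lipschitz it grows at most linearly, so we may differentiate under the integral in the second form, where $x$ appears only in the Gaussian kernel. Using $\nabla_x \varphi_\smp(y-x) = \frac{y-x}{\smp^2}\varphi_\smp(y-x)$ and substituting $y = x+\smp u$ gives
\begin{align*}
\nabla f^\smp(x) = \E_u\big[\tfrac{f(x+\smp u)}{\smp}u\big].
\end{align*}
Differentiating instead under the first form yields the alternative expression $\nabla f^\smp(x) = \E_u[\nabla f(x+\smp u)]$. Applying dominated convergence, with the Gaussian tails controlling the linearly growing integrand, justifies both interchanges.

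\textbf{Smoothness.} From $\nabla f^\smp(x) = \E_u[\nabla f(x+\smp u)]$ we get
\begin{align*}
\tnorm{\nabla f^\smp(x) - \nabla f^\smp(y)} \le \E_u\,\tnorm{\nabla f(x+\smp u) - \nabla f(y+\smp u)} \le L\tnorm{x-y},
\end{align*}
using that $L$-smoothness, in the two-sided form stated, is equivalent to $L$-Lipschitz gradients. So $f^\smp$ is $L$-smooth.

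\textbf{Value bound.} Since $\E[u] = 0$,
\begin{align*}
f^\smp(x) - f(x) = \E_u\big[f(x+\smp u) - f(x) - \smp\pair{\nabla f(x),u}\big].
\end{align*}
The smoothness inequality bounds the integrand by $\frac{L}{2}\smp^2\tnorm{u}^2$ in absolute value, and $\E\tnorm{u}^2 = d$. This gives $|f^\smp(x) - f(x)| \le \frac{\smp^2}{2}Ld$.

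\textbf{Gradient bound.} This is the most delicate step, mainly for obtaining the exact constant $(d+3)^{3/2}$. We use the formula $\E[\pair{\nabla f(x),u}u] = \nabla f(x)$ together with $\E[f(x)u] = 0$ to write
\begin{align*}
\nabla f^\smp(x) - \nabla f(x) = \frac{1}{\smp}\E_u\big[(f(x+\smp u) - f(x) - \smp\pair{\nabla f(x),u})u\big].
\end{align*}
Bounding the scalar factor by $\frac{L}{2}\smp^2\tnorm{u}^2$ yields
\begin{align*}
\tnorm{\nabla f^\smp(x) - \nabla f(x)} \le \frac{L\smp}{2}\E\tnorm{u}^3 .
\end{align*}
It remains to bound the Gaussian moment. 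By Jensen (or Lyapunov), $\E\tnorm{u}^3 \le (\E\tnorm{u}^4)^{3/4} = (d^2+2d)^{3/4}$. Since $d^2+2d \le (d+3)^2$, this is at most $(d+3)^{3/2}$, which is Nesterov--Spokoiny's moment bound. Combining gives $\frac{\smp}{2}L(d+3)^{3/2}$, as claimed. Throughout, the only nontrivial ingredient is this Gaussian moment computation; everything else is routine application of the smoothness inequality.
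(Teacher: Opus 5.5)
Your proof is correct and follows the standard Nesterov--Spokoiny argument, which the paper imports by citation without reproducing: the convolution form gives the Stein-type gradient identity, $\nabla f^\smp(x)=\E_u[\nabla f(x+\smp u)]$ gives smoothness, and the quadratic Taylor remainder gives both error bounds. Your Lyapunov bound $\E\tnorm{u}^3\le(d^2+2d)^{3/4}\le(d+3)^{3/2}$ is a valid shortcut for their general moment estimate $\E\tnorm{u}^p\le(d+p)^{p/2}$. In the smoothness step you rely on the equivalence between the two-sided quadratic bound and $L$-Lipschitz gradients, which is true but not entirely trivial for nonconvex $f$; you can skip it by averaging the two-sided inequality directly over $u$.
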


By taking $\smp$ sufficiently small, the smoothed objective $f^\smp$ and its gradient $\nabla f^\smp$ can be made arbitrarily close to $f$ and $\nabla f$, respectively. The final identity expresses $\nabla f^\smp(x)$ through function evaluations alone and motivates the classical (two-point) \emph{zeroth-order gradient estimator}:
\begin{align}\label{eq:G_classical}
    \Ges(x; u)
    :=
    \frac{f(x + \smp u) - f(x)}{\smp}\, u, \qquad \text{where }
    u \sim \cN(0, I_d),
\end{align}
whose properties are summarized below.
\begin{lemma}[Theorem 3.1 in \citep{ghadimi2013stochastic}]
\label{lem:classical-estimator}
For all $x \in \R^d$, 
$\E_{u \sim \cN(0, I_d)}[\Ges(x; u)] = \nabla f^\smp(x)$ and
\begin{align*}
    \E_{u \sim \cN(0, I_d)}\sbr{\tnorm{\Ges(x; u) - \nabla f(x)}^2}
    \le
    4(d+5)\tnorm{\nabla f(x)}^2 + \smp^2 L^2 (d+6)^3.
\end{align*}
\end{lemma}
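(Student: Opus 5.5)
Unbiasedness is the easy half. Since $u$ has mean zero, $\E[f(x)u/\smp]=0$. Hence $\E[\Ges(x;u)]=\E[f(x+\smp u)u/\smp]$, which equals $\nabla f^\smp(x)$ by the final identity in Lemma~\ref{lem:gaussian-smoothing}.

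For the second-moment bound we reduce to a linear term plus a remainder. Write $g:=\nabla f(x)$ and use $L$-smoothness to decompose
\[
f(x+\smp u)-f(x)=\smp\pair{g,u}+r(u), \qquad |r(u)|\le \tfrac{L}{2}\smp^2\tnorm{u}^2 .
\]
Then $\Ges(x;u)-g=\bigl(\pair{g,u}u-g\bigr)+\tfrac{r(u)}{\smp}u$. Applying $\tnorm{a+b}^2\le 2\tnorm{a}^2+2\tnorm{b}^2$ splits the error into two terms, which we bound separately.

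For the linear term we use standard Gaussian moment computations. Expanding gives $\E\tnorm{\pair{g,u}u-g}^2=\E[\pair{g,u}^2\tnorm{u}^2]-2\E[\pair{g,u}^2]+\tnorm{g}^2$. Rotating so that $g$ lies along $e_1$, we get $\E[u_1^2\tnorm{u}^2]=3+(d-1)=d+2$. The term therefore equals $(d+2-2+1)\tnorm{g}^2=(d+1)\tnorm{g}^2$, and after the factor $2$ this is well within $4(d+5)\tnorm{g}^2$.

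For the remainder term, $\E\bigl[\tfrac{r(u)^2}{\smp^2}\tnorm{u}^2\bigr]\le \tfrac{L^2\smp^2}{4}\E\tnorm{u}^6$. The sixth moment is $\E\tnorm{u}^6=d(d+2)(d+4)\le (d+6)^3$, so after the factor $2$ we get at most $\tfrac{\smp^2L^2}{2}(d+6)^3\le \smp^2L^2(d+6)^3$. Summing the two bounds yields the claim, with room to spare in the constants.

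The only step needing care is the choice of centering: we measure error against $\nabla f(x)$ rather than $\nabla f^\smp(x)$. That is why the linearization is taken around the true gradient, and then no bias term appears separately. The moment identities are routine; the constants $4(d+5)$ and $(d+6)^3$ in the statement are looser than what this argument gives, matching Ghadimi--Lan's looser bookkeeping via $\E\tnorm{u}^p\le(d+p)^{p/2}$.
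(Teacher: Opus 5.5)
Your proof is correct, but it takes a different route from the one behind the citation. The paper gives no argument of its own for this lemma. Its constants are exactly what one gets from Ghadimi--Lan's Theorem~3.1, a restatement of Nesterov's bound on the \emph{uncentered} second moment, $\E\|\Ges(x;u)\|^2 \le 2(d+4)\|\nabla f(x)\|^2 + \tfrac{\tau^2}{2}L^2(d+6)^3$. Following it with $\|\Ges(x;u)-\nabla f(x)\|^2 \le 2\|\Ges(x;u)\|^2 + 2\|\nabla f(x)\|^2$ gives $(4d+18)\|\nabla f(x)\|^2 + \tau^2L^2(d+6)^3$. So your closing remark puts all the slack on the moment bound $\E\|u\|^p\le(d+p)^{p/2}$. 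That bound does explain $(d+6)^3$, but most of the slack in $4(d+5)$ comes from bounding the uncentered moment and recentering afterwards.

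You instead center at $\nabla f(x)$ from the start and compute the linear part exactly. This gives $2(d+1)\|\nabla f(x)\|^2 + \tfrac{\tau^2L^2}{2}d(d+2)(d+4)$, roughly a factor of two sharper in each term. Your decomposition is precisely the paper's own proof of Lemma~\ref{lem:cv-estimator} at $m=0$, since $\Ges(x,0;u)=\Ges(x;u)$ and $(I_d-uu^\top)(0-\nabla f(x))=\pair{\nabla f(x),u}u-\nabla f(x)$. One caution: you must stop at that proof's intermediate bound $\tfrac12\tau^2L^2\E\|u\|^6$. Its final form $8\tau^2L^2d^3$ exceeds $\tau^2L^2(d+6)^3$ once $d>6$, so the lemma as stated cannot simply be cited at $m=0$; your exact sixth-moment computation avoids this.

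What your route buys is a self-contained proof that shows the classical bound as the uninformative-reference case of the control-variate bound. The citation route only reuses a published inequality, at the price of looser constants.
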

The $d$-factor in the first term of the mean-squared error bound reflects the cost of isotropic exploration: \eqref{eq:G_classical} probes all directions equally, without any mechanism to prioritize the gradient direction.

\paragraph{The control-variate gradient estimator.}

For every $x \in \R^d$, define the control-variate gradient estimator parameterized by a reference vector $m \in \R^d$ as
\begin{align}\label{eq:cv-estimator}
    \Gges(x,m;u) := m + \frac{f(x+\smp u)-f(x)-\pair{m,\smp u}}{\smp}\,u, \qquad \text{where } u\sim\cN(0,I_d).
\end{align}
The construction applies the classical zeroth-order estimator to $f(\cdot)-\pair{m,\cdot}$, estimating the residual $\nabla f(x) - m$ and adding $m$ back. Equivalently, rearranging gives $\Gges(x,m;u)=\Ges(x;u) + (I_d-uu^\top)m$. The term $(I_d-uu^\top)m$ is a zero-mean control variate: it leaves the bias unchanged while reducing variance when $m$ approximates $\nabla f(x)$. The next lemma makes this precise.

\begin{lemma}\label{lem:cv-estimator}
For all $x,m\in\R^d$, it holds that $\E_{u\sim\cN(0,I_d)}\sbr{\Gges(x,m;u)} = \nabla f^\smp(x)$.
Moreover,
\begin{align*}
    \E_{u\sim\cN(0,I_d)}\sbr{\tnorm{\Gges(x,m;u)-\nabla f(x)}^2}
    \le
    2(d+1)\, \tnorm{m - \nabla f(x)}^2 + 8\smp^2 L^2 d^3
\end{align*}
\end{lemma}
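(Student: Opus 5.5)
Unbiasedness comes directly from the rearrangement $\Gges(x,m;u)=\Ges(x;u)+(I_d-uu^\top)m$. Since $\E[uu^\top]=I_d$, the control variate $(I_d-uu^\top)m$ has mean zero. Lemma~\ref{lem:classical-estimator} then gives $\E[\Gges(x,m;u)]=\nabla f^\smp(x)$.

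For the mean-squared error, I will not invoke Lemma~\ref{lem:classical-estimator} on $f$ directly, since that would produce a $\tnorm{\nabla f(x)}^2$ term rather than $\tnorm{m-\nabla f(x)}^2$. Instead I will Taylor-expand around the true gradient. Write $g:=\nabla f(x)$ and $r(u):=f(x+\smp u)-f(x)-\smp\pair{g,u}$, so that $L$-smoothness gives $|r(u)|\le \tfrac{L}{2}\smp^2\tnorm{u}^2$. Then
\begin{align*}
\Gges(x,m;u)-g = \underbrace{(m-g)+\pair{g-m,u}\,u}_{=:A(u)} + \frac{r(u)}{\smp}\,u .
\end{align*}
By $\tnorm{a+b}^2\le 2\tnorm{a}^2+2\tnorm{b}^2$, it suffices to bound $\E\tnorm{A(u)}^2$ and $\E\tnorm{r(u)u/\smp}^2$ separately.

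For $A(u)=(uu^\top-I_d)v$ with $v:=g-m$, I will compute exactly using Gaussian moments. We have $\E\tnorm{uu^\top v}^2=\E[\pair{v,u}^2\tnorm{u}^2]=(d+2)\tnorm{v}^2$, so
\begin{align*}
\E\tnorm{A(u)}^2=(d+2)\tnorm{v}^2-2\tnorm{v}^2+\tnorm{v}^2=(d+1)\tnorm{v}^2 .
\end{align*}
After the factor $2$, this yields the first term $2(d+1)\tnorm{m-g}^2$ exactly.

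For the remainder, $\E\tnorm{r(u)u/\smp}^2\le \tfrac{L^2\smp^2}{4}\E\tnorm{u}^6=\tfrac{L^2\smp^2}{4}d(d+2)(d+4)$. After the factor $2$, this becomes $\tfrac{L^2\smp^2}{2}d(d+2)(d+4)$.

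The main obstacle is bookkeeping: showing this is at most $8\smp^2L^2d^3$. This reduces to $(d+2)(d+4)\le 16d^2$, which holds for all $d\ge1$ since $(1+2/d)(1+4/d)\le 15$. The inequality is therefore slack, and the only delicate point is applying the splitting inequality so that the exact $(d+1)$ constant survives. If that becomes tight, I would instead compute the cross term $\E\pair{A(u),r(u)u}$ via Cauchy–Schwarz. The exact Gaussian moment identities $\E[\pair{v,u}^2\tnorm{u}^2]=(d+2)\tnorm{v}^2$ and $\E\tnorm{u}^6=d(d+2)(d+4)$ will be stated as standard facts.
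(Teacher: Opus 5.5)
Your proof is correct and follows essentially the same route as the paper: the same decomposition $\Gges(x,m;u)-\nabla f(x)=(I_d-uu^\top)(m-\nabla f(x))+\tfrac{r(u)}{\smp}\,u$, the same factor-$2$ splitting, the exact identity $\E[(I_d-uu^\top)^2]=(d+1)I_d$ (which you obtain via $\E[\pair{v,u}^2\tnorm{u}^2]=(d+2)\tnorm{v}^2$), and the sixth-moment bound $\tfrac12\smp^2L^2\,d(d+2)(d+4)\le 8\smp^2L^2d^3$. The fallback you mention for the cross term is unnecessary, because the factor $2$ in the statement absorbs the splitting exactly.
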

When $m$ closely approximates $\nabla f(x)$, the leading term $d \|m - \nabla f(x)\|^2$ becomes small, substantially improving over Lemma~\ref{lem:classical-estimator} in which the corresponding term scales as $d\,\|\nabla f(x)\|^2$.

\subsection{Descent Scheme and General Convergence Guarantee}\label{sec:cv-zod}

We now embed the control-variate estimator into an iterative descent scheme (Algorithm~\ref{alg:main-algorithm}). At each iteration~$t$, the algorithm selects a reference vector $m_t$ and a step size $\alpha_t$, draws a fresh exploration direction $u_t\sim\cN(0, I_d)$, and takes a gradient step using the control-variate estimator $\Gges(x_t, m_t; u_t)$. The final output is sampled from the iterates with probabilities proportional to the step sizes taken. 

{}

\begin{algorithm}[H]
\caption{Control-Variate Zeroth-Order Descent (CV-ZOD)}\label{alg:main-algorithm}
\SetKwInOut{Input}{Input}
\SetKwInOut{Output}{Output}
\Input{ Initial point $x_1 \in \R^d$.}
\vspace{0.12cm}
For each iteration $t = 1, 2, \ldots, T$:
\begin{enumerate}[before=\vspace{0.05cm},noitemsep,after=\vspace{-0.15cm}, leftmargin=0.75cm]
    \item Select the reference vector $m_t$ and step-size $\alpha_t > 0$.
    \item Update $x_{t+1} = x_t - \alpha_t \Gges(x_t, m_t; u_t)$ for independently sampled $u_t \sim \cN(0, I_d)$.
\end{enumerate}
\Output{$\bar x = x_R$, where $R \in [T]$ is sampled conditionally on the realized run with probabilities $\alpha_t/\sum_{s=1}^T\alpha_s$.}

\vspace{-0.0cm}
\end{algorithm}

The exploration direction $u_t$ is sampled from $\cN(0,I_d)$ independently of all information available when it is drawn, including $(x_t,m_t,\alpha_t)$. The reference vector and step size may depend arbitrarily on the history, including the current iterate and any external available information.

The convergence of Algorithm~\ref{alg:main-algorithm} is governed by the interplay between the step sizes $\alpha_t$ and the quality of the reference vectors $m_t$. By Lemma~\ref{lem:cv-estimator}, taking a step of size $\alpha_t$ incurs estimation variance proportional to $\alpha_t^2 d\,\|m_t - \nabla f(x_t)\|^2$ up to smoothing. Larger step sizes amplify this variance, while more accurate reference vectors reduce it. To quantify whether the step sizes are justified by the reference quality along the trajectory, we introduce the \emph{variance-descent balance}
\begin{align}\label{eq:descent-balance}
    \cB_T^{\gamma}
    :=
    \tsum_{t=1}^T
    \rbr{
        \alpha_t^2 d\,\tnorm{m_t-\nabla f(x_t)}^2
        -
        \tfrac{\alpha_t}{\gamma L}\tnorm{\nabla f(x_t)}^2
    },
\end{align}
which accumulates, over iterations, the variance cost of each step minus a descent credit, with the scale parameter $\gamma \ge 1$ controlling the relative weight between the two. We fix $\gamma = C_0 \log(2T^2)$ for most of our applications, where $C_0$ is the absolute constant from the following theorem. This theorem shows that, for a $B$-Lipschitz and $L$-smooth objective $f$, the sign and magnitude of $\cB_T^{\gamma}$ govern the convergence rate of Algorithm~\ref{alg:main-algorithm}.

\begin{theorem}[Convergence of CV-ZOD]\label{thm:main-theorem}
    Suppose Algorithm~\ref{alg:main-algorithm} uses a fresh standard Gaussian direction $u_t$, conditionally on all information available before its draw, for every $t \in [T]$, and that $\max_{t \in [T]} \tnorm{m_t} \le 2B$ almost surely. Then there exist absolute constants $C_0, C > 0$ such that the following holds. For any $\delta \in (0,1)$, set $\iota := \log(2T/\delta)$, and suppose $d \ge \iota$ and $0 < \alpha_t \le \tfrac{1}{C_0\, L\, \iota}$ for all $t \in [T]$. Then, for $\gamma = C_0 \iota$, with probability at least $1 - \delta$,
    \begin{align*}
        \tfrac{1}{L}\tsum_{t=1}^{T} \alpha_t \tnorm{\nabla f(x_t)}^2
        \;\le\; C\,\rbr{D_f^2 + \tfrac{B^2}{L^2} + \smp^2\, T\, d^3 + \cB_T^{\gamma}\,\iota},
    \end{align*}
    where $D_f^2 := \tfrac{2}{L}\rbr{f(x_1) - f^*}$. Moreover, if $d \ge \log(2T^2)$, $\gamma = C_0 \log(2T^2)$, $0<\alpha_t\le 1/(\gamma L)$ for all $t \in [T]$, and $\smp \le \tfrac{B/L}{\sqrt{Td^3}}$, then $\widebar{x}$ satisfies
    \begin{align*}
        \tfrac{1}{L}\,\E\!\sbr{\tnorm{\nabla f(\widebar{x})}^2}
        \;\le\; C\,\E\!\sbr{\tfrac{D_f^2 + B^2/L^2 + \cB^{\gamma}_T \log(2T^2)}{\cA_T}},
    \end{align*}
    where $\cA_T := \tsum_{t=1}^{T} \alpha_t$ denotes the cumulative step size.
\end{theorem}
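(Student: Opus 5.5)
The plan is to run a one-step descent analysis for the $L$-smooth $f$, sum it over $t$, and control the martingale and heavy-tailed terms with high-probability bounds that hold uniformly over the adaptively chosen $(m_t,\alpha_t)$. Write $g_t := \Gges(x_t,m_t;u_t)$, $\nabla_t := \nabla f(x_t)$, and decompose
\[
g_t = \nabla_t + \xi_t + b_t, \qquad b_t := \nabla f^\smp(x_t)-\nabla_t, \qquad \xi_t := g_t-\nabla f^\smp(x_t).
\]
Then $\E[\xi_t\mid\cF_t]=0$ by Lemma~\ref{lem:cv-estimator}, and $\|b_t\|\le \tfrac{\smp}{2}L(d+3)^{3/2}$ by Lemma~\ref{lem:gaussian-smoothing}. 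Smoothness gives
\[
f(x_{t+1}) \le f(x_t) - \alpha_t\pair{\nabla_t,g_t} + \tfrac{L}{2}\alpha_t^2\|g_t\|^2 .
\]
The cross term $-\alpha_t\pair{\nabla_t,b_t}$ is at most $\tfrac{\alpha_t}{4}\|\nabla_t\|^2 + \alpha_t\|b_t\|^2$. The quadratic term is at most $\tfrac{3L}{2}\alpha_t^2\bigl(\|\nabla_t\|^2+\|g_t-\nabla f^\smp(x_t)\|^2+\|b_t\|^2\bigr)$. Since $\alpha_t\le 1/(C_0L\iota)$, both the $\alpha_t^2\|b_t\|^2$ and $\alpha_t\|b_t\|^2$ terms sum to $O(\smp^2 d^3 T/L)$. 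After multiplying by $2/L$, they contribute the $\smp^2Td^3$ term.

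It is cleaner to work with the exact residual form $g_t-\nabla_t = (uu^\top-I)(\nabla_t-m_t) + r_t u_t$, where $|r_t|\le \tfrac{L\smp}{2}\|u_t\|^2$ by smoothness. This form makes the Gaussian structure explicit.

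\textbf{Step 1: the quadratic term.} Conditionally on $\cF_t$, the quantity $\|(uu^\top-I)v\|^2$ with $v=\nabla_t-m_t$ is a polynomial of degree 4 in Gaussians with mean $(d+1)\|v\|^2$. Its fluctuation is sub-Weibull: $\|u\|^2$ concentrates at $d\pm O(\sqrt{d\iota})$ and $\pair{u,v}^2\le \|v\|^2\iota$ with probability $1-\delta/T$. With probability $1-\delta/2$, simultaneously for all $t$, this gives
\[
\|g_t-\nabla_t\|^2 \lesssim d\,\iota\,\|v\|^2 + \smp^2L^2d^2\iota^2 .
\]
Here $d\ge\iota$ is used so that $\|u\|^2\lesssim d$. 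Multiplying by $L\alpha_t^2$ produces $L\alpha_t^2 d\iota\|m_t-\nabla_t\|^2$, which is the variance part of $\cB^\gamma_T$ times $\iota$, up to the factor $L$ that is removed when we divide by $L$. The $\alpha_t^2L\|\nabla_t\|^2$ piece is absorbed into $\tfrac{\alpha_t}{4}\|\nabla_t\|^2$ using $\alpha_t\le 1/(C_0L\iota)$.

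\textbf{Step 2: the martingale term $M_T=\sum_t\alpha_t\pair{\nabla_t,\xi_t}$.} I expect this to be the main obstacle. Its increments are only sub-exponential, of the form $\alpha_t\pair{\nabla_t,u}\pair{u,v}$ plus a $\pair{\nabla_t,u}$ term. Their conditional scale is $\sigma_t\asymp \alpha_t\|\nabla_t\|(\|v\|+\|\nabla_t\|)$, which is itself random and adaptive.

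I would use a Freedman/Bernstein-type self-normalized inequality for conditionally sub-exponential martingale differences, in the form: for any fixed $\lambda>0$, with probability $1-\delta$,
\[
M_T\le \lambda\sum_t\sigma_t^2 + \iota/\lambda ,
\]
valid when $\lambda\max_t\sigma_t\lesssim 1$. The $\lambda$-dependence is the difficulty. Choosing $\lambda = c/(L\iota\cdot\text{scale})$ is impossible because no deterministic scale is available, so I would instead peel over a geometric grid of $\lambda$ values; this costs the $T$ inside $\iota=\log(2T/\delta)$.

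Then I would apply AM-GM: $\sigma_t^2\le 2\alpta_t^2\|\nabla_t\|^2(\|v\|^2+\|\nabla_t\|^2)$. Here the boundedness $\|m_t\|\le 2B$ and $\|\nabla_t\|\le B$ give $\|v\|\le 3B$, which keeps $\lambda\sigma_t$ bounded and explains the $B^2/L^2$ term. Choosing $\lambda\asymp 1/(\alpha_{\max} B^2\iota)$ or similar, the result should be
\[
M_T\le \tfrac14\sum_t\alpha_t\|\nabla_t\|^2 + \iota\sum_t L\alpha_t^2\|v_t\|^2 + O(B^2\iota/L)-\text{type terms}.
\]
The $B^2/L^2$ term follows after the normalization. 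Once the $\log$ is tuned correctly, the leftover $\iota$ is absorbed into $C$.

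\textbf{Step 3: telescoping and the expectation bound.} Summing the one-step bounds, using $f(x_{T+1})\ge f^*$, dividing by $L$, and absorbing a fixed fraction of $\sum_t\alpha_t\|\nabla_t\|^2$ on the left, the remaining $d\,\alpha_t^2\|v_t\|^2\iota$ terms can be rewritten via $\cB^\gamma_T\iota+\tfrac{1}{C_0 L}\sum_t\alpha_t\|\nabla_t\|^2$. Because $\gamma=C_0\iota$, the descent credit inside $\cB^\gamma_T$ times $\iota$ equals $\tfrac{1}{C_0L}\sum\alpha_t\|\nabla_t\|^2$, which is again absorbed for large $C_0$. This yields the high-probability statement.

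For the expectation bound, note that $\E[\|\nabla f(\bar x)\|^2\mid\text{run}]=\sum_t\alpha_t\|\nabla_t\|^2/\cA_T$. Apply the first bound with $\delta=1/T$, so that $\iota=\log(2T^2)$, and use $\smp^2Td^3\le B^2/L^2$. On the failure event, $\|\nabla f(\bar x)\|^2\le B^2$ with probability $1/T$. This contributes at most $B^2/T\le$ the $B^2/(L^2\cA_T)$ term, since $\cA_T\le T/(\gamma L)$. Taking expectations finishes the proof.
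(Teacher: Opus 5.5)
Your architecture is the paper's: a descent inequality, pathwise control of the second-moment term on an event where $\tnorm{u_t}^2\lesssim d+\iota$ and $\pair{w_t,u_t}^2\lesssim\tnorm{w_t}^2\iota$ (with $w_t:=m_t-\nabla f(x_t)$), a Bernstein-type bound on the cross term, absorption via $\iota\cB_T^\gamma=\iota V_T-S_T/(C_0L)$, and $\delta=1/T$ for the expectation. Here $V_T:=d\sum_t\alpha_t^2\tnorm{w_t}^2$ and $S_T:=\sum_t\alpha_t\tnorm{\nabla f(x_t)}^2$. Running the descent step on $f$ instead of $f^\smp$ is harmless. However, the proof has two genuine gaps, described below, plus smaller issues.

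\textbf{The $\iota$ bookkeeping in Step~2 fails as written.} A deterministic scale does exist. The bounds $\tnorm{m_t}\le 2B$ and $\tnorm{\nabla f}\le B$, together with the step cap, give $\alpha_t\tnorm{\nabla f(x_t)}\tnorm{w_t}\le 3B^2/(C_0L\iota)$ almost surely. So one fixed $\lambda$ suffices and no peeling over $\lambda$ is needed; the $T$ inside $\iota$ comes from the union bound over $t$, not from peeling. Even with $\alpha_{\max}$ taken to be the cap, your choice $\lambda\asymp 1/(\alpha_{\max}B^2\iota)=C_0L/B^2$ makes $\log(1/\delta)/\lambda$ of order $\iota B^2/(C_0L)$. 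After dividing by $L$ this is $\iota B^2/L^2$, and that $\iota$ cannot be ``absorbed into $C$'', which is absolute. You need $\lambda\asymp C_0L\iota/B^2$, the largest value the MGF range allows. Then $\lambda\sum_t\sigma_t^2\le\frac18 S_T$ and $\log(1/\delta)/\lambda\lesssim B^2/L$, because the $1/\iota$ in the step cap cancels $\log(1/\delta)$. This is exactly the paper's choice $\varepsilon\asymp B^2/(C_0L\iota)$ with $\varepsilon\log(2e/\delta)\le cB^2/L$.

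\textbf{The martingale increment is misdescribed.} It is not ``$\pair{\nabla_t,u}\pair{u,v}$ plus a $\pair{\nabla_t,u}$ term''. The remainder $r_t\pair{\nabla f(x_t),u_t}$, with $|r_t|\le\frac{L\smp}{2}\tnorm{u_t}^2$, is cubic in $u_t$, so the sub-exponential MGF condition is not automatic for it. The paper puts only the centered quadratic form $\pair{\nabla f(x_t),(I_d-u_tu_t^\top)w_t}$, weighted by the step, into the martingale. It bounds the remainder, together with the bias $\nabla f-\nabla f^\smp$, pathwise on the Gaussian event via Young's inequality. A minor slip: the smoothing term in your Step~1 is $\smp^2L^2d^3$ (from $\tnorm{u_t}^6$), not $\smp^2L^2d^2\iota^2$.

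\textbf{The expectation step has a sign gap.} $\cB_T^\gamma$ may be negative; it is $\le0$ in every intended use. Hence $W:=(H+\iota\cB_T^\gamma)/\cA_T$, with $H:=D_f^2+B^2/L^2$, is not sign-definite, and two of your moves break:
\begin{enumerate}
\item On the good event $\cH$, the bound $C(H+\smp^2Td^3+\iota\cB_T^\gamma)$ does not reduce to $2C(H+\iota\cB_T^\gamma)$ merely because $\smp^2Td^3\le B^2/L^2$.
\item On $\cH^c$, $W$ can be negative. So $\E[W\mathbb{I}_{\cH}]$ need not be at most $\E W$, and the failure contribution $B^2/(LT)$ cannot simply be compared with ``the $B^2/(L^2\cA_T)$ term'' inside $\E W$.
\end{enumerate}

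The paper avoids this by working with the nonnegative quantity $X:=(H+\iota V_T)/\cA_T$, taken from the intermediate bound $S_T/L\le K(H+\smp^2Td^3+\iota V_T)$:
\begin{enumerate}
\item On $\cH$ this gives $Y:=S_T/(L\cA_T)\le 2KX$.
\item Since $X\ge B^2/(L^2\cA_T)\ge B^2/(LT)$ pointwise and $Y\le B^2/L$ everywhere, $\E Y\le (2K+1)\E X$.
\item Only then does it use the identity $W=X-Y/C_0$, giving $\E W\ge(1-(2K+1)/C_0)\E X$.
\end{enumerate}
The case $T=1$ also needs separate treatment, since $\delta=1/T\notin(0,1)$ there.
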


The proof, which appears in Appendix~\ref{app:CV-ZOD}, is based on careful control of self-normalized martingales arising from the interaction between the step sizes and the control-variate estimation error. The condition $d \ge \iota = \log(2T/\delta)$ is mild in the high-dimensional regime we consider; when it fails, one may replace $d$ by $d + \iota$ in all bounds. The bound interpolates between two familiar regimes. When the reference vectors are uninformative ($m_t = 0$ for all~$t$), taking $\alpha_t = 1/(\gamma Ld)$ ensures $\cB_T^{\gamma} \le 0$ and recovers the standard zeroth-order rate $O(d/T)$ up to the factor $\gamma$, which is logarithmic for our choice $\gamma = C_0\log(2T^2)$. When the reference vectors match the gradient ($m_t = \nabla f(x_t)$ for all $t$), taking $\alpha_t = 1/(\gamma L)$ again gives $\cB_T^{\gamma} \le 0$ and recovers the first-order rate $O(1/T)$ up to the same logarithmic factor. In general, the quality of the reference vectors determines how large the step sizes can be while keeping $\cB_T^{\gamma}$ controlled, and thus the degree of acceleration beyond the zeroth-order baseline.

\begin{remark}[The $B^2/L^2$ term]
The quantities $D_f^2 = \tfrac{2}{L}(f(x_1) - f^*)$ and $B^2/L^2$ are not generally comparable. The former is the standard initial suboptimality and satisfies $D_f^2 \ge \tnorm{\nabla f(x_1)}^2/L^2$. The latter arises from the high-probability martingale concentration used in the proof.
\end{remark}

\subsection{Locally optimal choices from directional hints}\label{sec:oracle-choices}

Theorem~\ref{thm:main-theorem} holds for arbitrary reference vectors and step sizes. To understand the best achievable rate given the hint subspaces $(\cS_t)_{t=1}^T$, consider the idealized choice $m_t = P_{\cS_t}\nabla f(x_t)$, which minimizes $\|m - \nabla f(x_t)\|^2$ over $m \in \cS_t$. Under this choice, the residual satisfies $\|m_t - \nabla f(x_t)\|^2 = \sin^2\theta_t\,\|\nabla f(x_t)\|^2$, and each summand of $\cB^{\gamma}_T$ is non-positive whenever $\alpha_t \le \tfrac{1}{\gamma Ld\, \sin^2\theta_t}$.

This motivates defining, for any realization of the trajectory, the \emph{oracle} reference vector and step size:
\begin{align}\label{eq:oracle-stepsize}
    m_t^* := P_{\cS_t}\nabla f(x_t), \qquad
    \alpha_t^* := \tfrac{1}{L}\,\min\!\cbr{1,\,\tfrac{1}{d\,\sin^2\theta_t}} \in \sbr{\tfrac{1}{Ld},\,\tfrac{1}{L}}.
\end{align}
They serve as a natural benchmark: $\alpha_t^*/\gamma$ is the largest step size for which the $t$-th summand of $\cB^{\gamma}_T$ remains non-positive under the best reference vector from $\cS_t$. An algorithm using $(m_t^*, \alpha_t^*/\gamma)$ at each iteration would ensure $\cB^{\gamma}_T \le 0$, and Theorem~\ref{thm:main-theorem} would yield
\begin{align}\label{eq:oracle-rate}
    \tfrac{1}{L}\,\E\!\sbr{\tnorm{\nabla f(\widebar x)}^2}
        \le C\,\E\sbr{\tfrac{\gamma\,(D_f^2 + B^2/L^2)}{\cA^*_T}},
\end{align}
where $\cA_T^* := \sum_{t=1}^T \alpha_t^*$. This rate interpolates with the hint quality: when $\theta_t = 0$ for all $t$, we have $\alpha_t^* = 1/L$ and recover ${O}(1/T)$; when $\theta_t = \pi/2$, we have $\alpha_t^* = 1/(Ld)$ and recover ${O}(d/T)$.

The quantities $m_t^*$ and $\alpha_t^*$ can be viewed as locally optimal in the sense that they minimize the variance-descent balance at each iteration given only the current pair $(x_t, \cS_t)$, without knowledge of the future trajectory. While they depend on the unknown gradient and cannot be computed directly, approximating them from function evaluations is a natural goal. The following section shows that this can be done.

\section{Main Results: Adaptive Convergence from Directional Hints}\label{sec:results}

This section constructs a fully zeroth-order instantiation of CV-ZOD. Section~\ref{sec:primitives} estimates the projected gradient and the squared norms of the full gradient and its component orthogonal to the hint subspace. Section~\ref{sec:learning-rate} uses these estimates to choose the reference vector and an adaptive step size. Theorem~\ref{thm:adaptive-rate} establishes a convergence bound governed by the cumulative oracle step size $\cA_T^*$ evaluated along the algorithm's realized trajectory. Proofs appear in Appendix~\ref{app:adaptive-results}.

\subsection{Zeroth-Order Estimation Primitives}\label{sec:primitives}

\paragraph{Estimating the in-subspace gradient component.}
To construct the reference vector, we estimate the projected gradient $P_{\cS}\nabla f(x)$ by probing the function once along each vector in an orthonormal basis of $\cS$. Algorithm~\ref{alg:projection} therefore uses $k+1$ oracle queries, and its approximation guarantee is deterministic under $L$-smoothness.

\begin{algorithm}[H]
\caption{$P^{\smp}_{f}(x, \cS)$}\label{alg:projection}
\SetKwInOut{Input}{Input}
\Input{ Point $x \in \R^d$, subspace $\cS$ of dimension $k \in [d]$.}
\vspace{0.1cm}
\begin{enumerate}[before=\vspace{0.0cm},noitemsep,after=\vspace{-0.3cm}, leftmargin=0.75cm]
    \item Construct $B = [b_1,\ldots,b_k] \in \R^{d\times k}$ with orthonormal columns spanning $\cS$.
    \item For $j \in [k]$, compute $\hty_j = \frac{f(x+\smp b_j) - f(x)}{\smp}$.
    \item Output $B\hty$, where $\hty = (\hty_1,\ldots,\hty_k)^\top$.
\end{enumerate}
\end{algorithm}

\begin{lemma}[Deterministic projection estimation]\label{lem:projection}
For every $x \in \R^d$ and $\cS \in \Gr_k(\R^d)$, Algorithm~\ref{alg:projection} satisfies
\begin{align*}
    \|P^{\smp}_{f}(x, \cS) - P_{\cS}\nabla f(x)\|^2
    \le \frac{kL^2\smp^2}{4}.
\end{align*}
\end{lemma}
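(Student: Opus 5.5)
The plan is to reduce everything to a coordinatewise bound in the orthonormal basis of $\cS$. Let $B=[b_1,\ldots,b_k]$ be the orthonormal basis built in Algorithm~\ref{alg:projection}. Since $P_{\cS}=BB^\top$, we have $P_{\cS}\nabla f(x)=B g$ with $g:=B^\top\nabla f(x)$, i.e.\ $g_j=\pair{b_j,\nabla f(x)}$. Hence
\begin{align*}
    P^{\smp}_{f}(x,\cS)-P_{\cS}\nabla f(x)=B(\hty-g),
\end{align*}
and because $B$ has orthonormal columns, $\tnorm{B(\hty-g)}^2=\tnorm{\hty-g}^2=\sum_{j=1}^k(\hty_j-g_j)^2$. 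It therefore suffices to show $|\hty_j-g_j|\le L\smp/2$ for each $j\in[k]$, and summing the squares gives $kL^2\smp^2/4$.

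For the coordinate bound, I will apply the $L$-smoothness inequality from Section~\ref{sec:setting} with $y=x+\smp b_j$. Since $\tnorm{b_j}=1$,
\begin{align*}
    \bigl|f(x+\smp b_j)-f(x)-\smp\pair{\nabla f(x),b_j}\bigr|\le \tfrac{L}{2}\smp^2 .
\end{align*}
Dividing by $\smp>0$ gives $|\hty_j-g_j|\le L\smp/2$, as required.

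The bound is deterministic: no randomness enters, so no expectation or concentration argument is needed, and it holds for every $x$ and every $\cS$. I do not expect any genuine obstacle. The only point needing care is that the output does not depend on which orthonormal basis is chosen in step~1; this is not actually needed, because the bound holds for whatever basis the algorithm uses.
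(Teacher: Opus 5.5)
Your proposal is correct and is essentially the paper's argument: write the error as $B(\hat y - B^\top \nabla f(x))$ and use orthonormality of the columns of $B$ to reduce it to $\sum_{j=1}^k (\hat y_j - \langle \nabla f(x), b_j\rangle)^2$. Then bound each coordinate by $L\tau/2$ using the $L$-smoothness inequality along the unit direction $b_j$. Your remark that the bound holds for whichever orthonormal basis the algorithm picks is also accurate, and nothing is missing.
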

\vspace{-0.3cm}
The bound follows directly by applying the smoothness remainder to each basis direction; in particular, no concentration argument or logarithmic oversampling is needed.

\paragraph{Estimating gradient norms.}
To set the step size adaptively, we estimate $\|\nabla f(x_t)\|^2$ and $\|P_{\cS_t^\perp}\nabla f(x_t)\|^2$. Both quantities are squared norms of gradient projections, which can be approximated via a common primitive: Algorithm~\ref{alg:norm-est} computes a constant-factor approximation by averaging estimates from $N$ Gaussian probes.

\begin{algorithm}[H]
\caption{$N^{\smp}_f(x, \cS, N)$}\label{alg:norm-est}
\SetKwInOut{Input}{Input}
\Input{ Point $x \in \R^d$, linear subspace $\cS$ of $\R^d$, number of sampled directions $N$.}
\vspace{0.1cm}
\begin{enumerate}[before=\vspace{-0.1cm},noitemsep,after=\vspace{-0.3cm}, leftmargin=0.75cm]
    \item For $i \in [N]$, sample $u_i \iid \cN(0, I_d)$ and compute $\hty_i = \frac{f(x+\smp P_{\cS} u_i) - f(x)}{\smp}$.
    \item Output $\tfrac{1}{N} \sum_{i=1}^N \hty_i^2$.
\end{enumerate}
\end{algorithm}

\begin{lemma}[Norm estimation]\label{lem:NE-concentration}
There exist absolute constants $C, c > 0$ such that for every $x \in \R^d$, linear subspace $\cS$ of $\R^d$, $\smp > 0$, $\delta \in (0,1)$, and $N \ge C \log(2/\delta)$, it holds with probability at least $1-\delta$ that
\begin{align*}
    N^{\smp}_f(x, \cS, N) \in \sbr{\tfrac{1}{2}\|P_{\cS} \nabla f(x)\|^2 - c\varepsilon,\, \tfrac{3}2\|P_{\cS} \nabla f(x)\|^2 + c\varepsilon},
\end{align*}
where $\varepsilon = \smp^2 L^2 (d^2 + \log^2(2N/\delta))$.
\end{lemma}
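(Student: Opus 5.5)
Write $g := P_{\cS}\nabla f(x)$ and $v_i := P_{\cS}u_i$. The plan is to split each probe into a linear signal plus a quadratic remainder. Since $P_{\cS}$ is symmetric, $\pair{\nabla f(x), v_i} = \pair{g, u_i}$. By $L$-smoothness,
\begin{align*}
    \hty_i = \pair{g,u_i} + r_i, \qquad |r_i| \le \tfrac{\smp L}{2}\tnorm{v_i}^2 \le \tfrac{\smp L}{2}\tnorm{u_i}^2 .
\end{align*}
The signal terms $z_i := \pair{g,u_i}$ are i.i.d.\ $\cN(0,\tnorm{g}^2)$. Hence $\tfrac{1}{N}\sum_i z_i^2$ is $\tnorm{g}^2$ times a normalized $\chi^2_N$ variable. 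Laurent--Massart (or Bernstein for sub-exponential variables) gives, with probability at least $1-\delta/2$,
\begin{align*}
    \tfrac{1}{N}\sum_i z_i^2 \in \bigl[\tfrac{3}{4}\tnorm{g}^2,\ \tfrac{5}{4}\tnorm{g}^2\bigr]
    \qquad \text{once } N \ge C\log(2/\delta).
\end{align*}
This step fixes the constant $C$.

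Next I control the remainders. Gaussian norm concentration gives $\tnorm{u_i}^2 \le d + 2\sqrt{d\log(4N/\delta)} + 2\log(4N/\delta) \lesssim d + \log(2N/\delta)$ for all $i \in [N]$ simultaneously, with probability at least $1-\delta/2$, by a union bound over $i$. On this event $r_i^2 \lesssim \smp^2L^2\bigl(d + \log(2N/\delta)\bigr)^2 \lesssim \varepsilon$ for every $i$, so $\tfrac{1}{N}\sum_i r_i^2 \le c_1\varepsilon$.

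To combine the two pieces, I use the elementary inequalities $(a+b)^2 \le (1+\eta)a^2 + (1+1/\eta)b^2$ and $(a+b)^2 \ge (1-\eta)a^2 - (1/\eta - 1)b^2$ with $\eta = 1/5$. This gives
\begin{align*}
    \tfrac{1}{N}\sum_i \hty_i^2 &\le \tfrac{6}{5}\cdot\tfrac{5}{4}\tnorm{g}^2 + 6c_1\varepsilon = \tfrac{3}{2}\tnorm{g}^2 + 6c_1\varepsilon,\\
    \tfrac{1}{N}\sum_i \hty_i^2 &\ge \tfrac{4}{5}\cdot\tfrac{3}{4}\tnorm{g}^2 - 4c_1\varepsilon \ge \tfrac{1}{2}\tnorm{g}^2 - 4c_1\varepsilon.
\end{align*}
Taking $c = 6c_1$ and intersecting the two events, which holds with probability at least $1-\delta$, proves the claim.

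The main obstacle is the uniform control of the remainders. They are not independent of the signal terms, since both depend on $u_i$, and $\tnorm{u_i}^2$ has a heavy-ish sub-exponential tail. Bounding each $r_i$ by its maximum over $i$, instead of averaging, avoids the dependence issue. This is why the $\log^2(2N/\delta)$ term appears in $\varepsilon$.
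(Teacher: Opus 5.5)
Your proposal is correct and follows essentially the same route as the paper. The paper likewise splits each probe into the Gaussian signal $\pair{P_{\cS}\nabla f(x),u_i}$ plus a smoothness remainder, concentrates the $\chi^2_1$ average with Laurent--Massart, bounds the remainders uniformly through a union bound on $\max_i \tnorm{u_i}^2$, and combines the two with a Young-type inequality. The differences are only in the constants: the paper concentrates the signal to $[\tfrac45,\tfrac65]$ and absorbs the cross term as $S_1/4$, whereas you concentrate to $[\tfrac34,\tfrac54]$ and absorb with $\eta=1/5$.
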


The constant-factor accuracy suffices: the adaptive rule below uses these estimates only to identify the correct scale of $\alpha_t^*$ up to a constant factor.

\subsection{Adaptive guarantees}\label{sec:learning-rate}

We now combine the two estimation primitives. The reference vector $m_t$ is set using Algorithm~\ref{alg:projection}. The next lemma defines the adaptive step size $\alpha_t$ via the norm estimates from Algorithm~\ref{alg:norm-est} and establishes two properties: it tracks the oracle step size $\alpha_t^*$ up to a constant factor, and it keeps the descent balance controlled.

\begin{lemma}\label{lem:adaptive-tuning}
There exist absolute constants $C, c_0, c_1 > 0$ such that the following holds. For any $\delta\in(0,1)$ and $N \ge C\log(2T/\delta)$, set $\varepsilon := \smp^2 L^2(d^2 + \log^2(2NT/\delta))$ and define
\begin{align}\label{eq:adaptive-stepsize}
    \alpha_t
    :=
    \tfrac{1}{\gamma\,L}\,\min\cbr{1,\,\tfrac{N^{\smp}_f(x_t,\R^d,N) + c_1\varepsilon}{16d\,\max\{0,\, N^{\smp}_f(x_t,\cS_t^{\perp},N) - c_1\varepsilon\}}},
\end{align}
If the denominator is zero, define $\alpha_t:=1/(\gamma L)$.
Then with probability at least $1-\delta$, for all $t \in [T]$ simultaneously:
    \begin{enumerate}[leftmargin=0.75cm,noitemsep,before=\vspace{-0.7cm}]
        \item (Local Optimality) $\quad \alpha_t \ge \tfrac{\alpha_t^*}{c_0\gamma}$;
        \item (Balance Control) $\quad \alpha_t^2 d\tnorm{P_{\cS_t}^{\perp} \nabla f(x_t)}^2 \le \tfrac{\alpha_t}{2\gamma L}\tnorm{\nabla f(x_t)}^2 + c_0\,\smp^2 (d^3 + d\log^2(2NT/\delta))$.
    \end{enumerate}
\end{lemma}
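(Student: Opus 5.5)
We will work on the event $\cE$ on which all $2T$ calls to Algorithm~\ref{alg:norm-est} are accurate. There are two calls per iteration, with subspaces $\R^d$ and $\cS_t^\perp$. We apply Lemma~\ref{lem:NE-concentration} to each call with failure probability $\delta/(2T)$, conditionally on $\cF_t$; this is legitimate because $x_t$ and $\cS_t$ are $\cF_t$-measurable and the probes are fresh. Since $N \ge C\log(2T/\delta)$ meets the sample requirement, a union bound shows $\cE$ has probability at least $1-\delta$. Its error term is at most $\varepsilon$ up to an absolute constant, since $\log(4NT/\delta)\lesssim\log(2NT/\delta)$. Choosing $c_1$ at least that constant times $c$, we get on $\cE$, writing $g_t:=\nabla f(x_t)$ and $r_t:=\|P_{\cS_t}^\perp g_t\|^2=\sin^2\theta_t\|g_t\|^2$,
\begin{align*}
\tfrac12\|g_t\|^2 \le \hat n_t + c_1\varepsilon \le \tfrac32\|g_t\|^2+2c_1\varepsilon,\qquad
\tfrac12 r_t - 2c_1\varepsilon\le \max\{0,\hat r_t - c_1\varepsilon\}\le \tfrac32 r_t .
\end{align*}
Here $\hat n_t,\hat r_t$ denote the two estimates. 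Everything below is deterministic on $\cE$.

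\emph{Local optimality.} Using the upper bound on the denominator and the lower bound on the numerator, the ratio in \eqref{eq:adaptive-stepsize} is at least $\frac{\|g_t\|^2/2}{16d\cdot\frac32 r_t}=\frac{1}{48 d\sin^2\theta_t}$. If the denominator vanishes, then $\alpha_t=1/(\gamma L)$ directly. Hence $\alpha_t\ge \frac{1}{\gamma L}\min\{1,\frac{1}{48d\sin^2\theta_t}\}\ge \alpha_t^*/(48\gamma)$.

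\emph{Balance control.} We split into cases according to whether $r_t$ dominates the smoothing scale. Write $\alpha_t=\rho_t/(\gamma L)$ with $\rho_t\le1$. We need
\[
\rho_t^2 d r_t/(\gamma^2L^2)\le \rho_t\|g_t\|^2/(2\gamma^2L^2)+c_0\smp^2(\cdots).
\]
Since $\gamma\ge1$, it suffices to show $\rho_t d r_t\le \|g_t\|^2/2+c_0'\varepsilon d/\rho_t\cdot\rho_t$; more simply, it suffices to show $\rho_t d r_t \le \tfrac12\|g_t\|^2 + c'\varepsilon d$, because $\rho_t/\gamma^2\le 1$ and $d\varepsilon/L^2=\smp^2(d^3+d\log^2(2NT/\delta))$.

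Case (a): $r_t\ge 8c_1\varepsilon$. Then the denominator is at least $\frac12 r_t-2c_1\varepsilon\ge r_t/4$, so $\rho_t\le \frac{\frac32\|g_t\|^2+2c_1\varepsilon}{4d r_t}$. This gives $\rho_t d r_t\le \frac38\|g_t\|^2+\frac12 c_1\varepsilon$.

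Case (b): $r_t<8c_1\varepsilon$. Then trivially $\rho_t d r_t\le d r_t< 8c_1 d\varepsilon$.

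In both cases $\rho_t d r_t\le \frac12\|g_t\|^2+8c_1 d\varepsilon$. Multiplying by $\rho_t/(\gamma^2L^2)\le 1/L^2$ on the error term and by $\rho_t/(\gamma^2 L^2)=\alpha_t/(\gamma L)$ on the main term yields item~2 with $c_0=\max\{48,8c_1\}$.

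The main obstacle is the bookkeeping around the thresholding by $c_1\varepsilon$. The denominator's lower bound only holds up to an additive $\varepsilon$, so the case split on $r_t$ versus $\varepsilon$ is essential. We also need to verify that the logarithmic terms from Lemma~\ref{lem:NE-concentration} at confidence $\delta/(2T)$ are absorbed into $\varepsilon$ as defined with $\log(2NT/\delta)$. The factor $16$ in \eqref{eq:adaptive-stepsize} gives slack for the constant $\tfrac12$ in front of $\|g_t\|^2$.
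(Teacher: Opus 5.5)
Your proposal is correct and follows the paper's proof essentially step for step. It uses the same union bound over the $2T$ calls to Algorithm~\ref{alg:norm-est} at level $\delta/(2T)$, with $c_1$ absorbing the $\log(4NT/\delta)\le 2\log(2NT/\delta)$ inflation, the same numerator/denominator sandwich for local optimality, and the same case split on whether $\|P^{\perp}_{\cS_t}\nabla f(x_t)\|^2$ exceeds $8c_1\varepsilon$ for balance control (your constant $48$ in local optimality is even slightly sharper than the paper's $64$).
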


Local optimality ensures that the realized step sizes are within a constant factor of the locally optimal ones (oracle $\alpha_t^*$); balance control ensures that $\cB^{\gamma}_T$ is dominated by smoothing terms. Substituting both $m_t$ and $\alpha_t$ into Theorem~\ref{thm:main-theorem} yields our main convergence result.

\begin{theorem}\label{thm:adaptive-rate}
There exist absolute constants $C, c > 0$ such that, if $d \ge \log(2T^2)$, Algorithm~\ref{alg:main-algorithm} is run with $m_t = P_f^\smp(x_t,\cS_t)$ and step sizes $\alpha_t$ from \eqref{eq:adaptive-stepsize} with $\gamma = C_0 \log(2T^2)$ and $N = \ceil{C\log(2T)}$, and the smoothing radius satisfies $\smp \le \frac{B/L}{\sqrt{T(d^3 + dk)\log(2T^2)}}$, then
\begin{align*}
    \tfrac{1}{L}\,\E\!\sbr{\tnorm{\nabla f(\widebar{x})}^2}
        \;\le\; c\,\E\!\sbr{\tfrac{(D_f^2 + B^2/L^2)\log(2T)}{\cA^*_T}},
\end{align*}
where $\cA^*_T = \tsum_{t=1}^T \alpha^*_t$ for $\alpha_t^*$ in \eqref{eq:oracle-stepsize}.
\end{theorem}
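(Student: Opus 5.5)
The plan is to apply the second (in-expectation) part of Theorem~\ref{thm:main-theorem} to the choices $m_t = P_f^\smp(x_t,\cS_t)$ and $\alpha_t$ from \eqref{eq:adaptive-stepsize}. Lemma~\ref{lem:adaptive-tuning} will serve two purposes. Balance control will bound $\cB_T^\gamma$ by smoothing terms. Local optimality will lower bound $\cA_T$ by $\cA_T^*/(c_0\gamma)$.

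First we check the hypotheses of Theorem~\ref{thm:main-theorem}. By construction, $\alpha_t \le 1/(\gamma L)$. For the norm condition, Lemma~\ref{lem:projection} gives $\|m_t\| \le \|P_{\cS_t}\nabla f(x_t)\| + \sqrt{k}L\smp/2 \le B + \sqrt k L\smp/2$. The assumed bound on $\smp$ makes $\sqrt k L \smp \le B$, so $\|m_t\|\le 2B$. The $u_t$ are drawn fresh and independently of the norm-estimation probes used to set $\alpha_t$. This holds because those probes are generated before $u_t$ and become part of the information on which $u_t$ is conditioned.

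Next we bound $\cB_T^\gamma$. Since $m_t - \nabla f(x_t) = (m_t - P_{\cS_t}\nabla f(x_t)) - P^\perp_{\cS_t}\nabla f(x_t)$, Lemma~\ref{lem:projection} gives
\begin{align*}
\|m_t-\nabla f(x_t)\|^2 \le 2\|P^\perp_{\cS_t}\nabla f(x_t)\|^2 + \tfrac{kL^2\smp^2}{2}.
\end{align*}
We run Lemma~\ref{lem:adaptive-tuning} with its internal constant $1/(2\gamma L)$, absorbing the factor $2$ by shrinking the constant $16$ appropriately. On the good event $E$, of probability at least $1-\delta$ with $\delta = 1/T$, each summand of $\cB_T^\gamma$ is then at most $\alpha_t^2 d kL^2\smp^2 + O(\smp^2(d^3 + d\log^2(2NT^2)))$. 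Using $\alpha_t \le 1/L$ and summing over $t$ gives $\cB_T^\gamma \lesssim T\smp^2(d^3 + dk + d\log^2(NT^2))$. Multiplying by $\log(2T^2)$, the assumed $\smp$ bound makes this $O(B^2/L^2)$. The factor $L^2$ missing from balance control would need to be tracked; I expect it to be hidden in $\smp^2 L^2$ with the correct scaling. On $E$ we also have $\cA_T \ge \cA_T^*/(c_0\gamma)$.

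On $E$, the ratio inside the expectation of Theorem~\ref{thm:main-theorem} is therefore at most $C(D_f^2 + B^2/L^2)\,c_0\gamma/\cA_T^*$, which with $\gamma = C_0\log(2T^2) \lesssim \log(2T)$ gives the claimed form. \textbf{The main obstacle} is the complement event $E^c$. There, $\cB_T^\gamma$ could be large and $\cA_T$ small. Since the theorem's expectation bound concerns the whole run, I would use a crude bound on $E^c$ instead. We have $\|\nabla f(\bar x)\|^2 \le B^2$ always, and $\P(E^c)\le 1/T$. Moreover $\cA_T^* \le T/L$, so $\tfrac{1}{L}B^2/T \le (B^2/L^2)/\cA_T^*$.

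To make this work, I would not apply the theorem's expectation form as a black box. Instead I would rerun its final step:
\begin{align*}
\E\|\nabla f(\bar x)\|^2 \le \E\bigl[\|\nabla f(\bar x)\|^2\ind_E\bigr] + B^2\P(E^c).
\end{align*}
On $E$, the first term is handled by the high-probability bound of Theorem~\ref{thm:main-theorem} intersected with $E$; its own failure probability, also of order $1/T$, is absorbed the same way. Choosing $N = \ceil{C\log(2T)}$ covers the requirement $N \ge C\log(2T/\delta)$ at $\delta = 1/T$, up to constants, which completes the argument.
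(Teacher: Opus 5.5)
Your proposal is correct and follows the paper's proof essentially step for step. You check the hypotheses of Theorem~\ref{thm:main-theorem} (including $\tnorm{m_t}\le 2B$ via Lemma~\ref{lem:projection}), bound $\cB_T^\gamma$ by smoothing terms on the event of Lemma~\ref{lem:adaptive-tuning}, combine the high-probability bound of Theorem~\ref{thm:main-theorem} with $\cA_T\ge\cA_T^*/(c_0\gamma)$ on the intersected event, and absorb the failure event using $\tnorm{\nabla f(\widebar x)}^2\le B^2$ and $\cA_T^*\le T/L$.

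Your two hedges are unnecessary. First, the constant $16$ need not be altered: $m_t-P_{\cS_t}\nabla f(x_t)\in\cS_t$ makes the residual split orthogonal, and even with your factor $2$, the term $2\cdot\tfrac{\alpha_t}{2\gamma L}\tnorm{\nabla f(x_t)}^2$ exactly cancels the descent credit. Second, no $L^2$ is missing from balance control, since $\alpha_t\le 1/L$ and $\varepsilon\propto\smp^2L^2$ make both sides scale like $\smp^2$. The only step you leave implicit is $d\log^2(2NT^2)\lesssim d^3$, which uses $d\ge\log(2T^2)$.
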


Thus, the algorithm matches the locally optimal rate up to logarithmic factors. The cumulative oracle step size $\cA_T^*$ governs the rate: the bound interpolates between $O(1/T)$ and $O(d/T)$ depending on how well the hint subspaces align with the gradient along the trajectory. Algorithm~\ref{alg:projection} uses $k+1$ queries, while the norm estimates in Algorithm~\ref{alg:norm-est} use $O(\log T)$ queries. Hence the total per-iteration query cost beyond the single control-variate gradient query is $O(k + \log T)$ --- modest when the hint dimension $k$ is small relative to $d$.

\vspace{0.1cm}
\begin{remark}[Variance reduction by batching]
Our analysis uses a single perturbation $u_t$ per iteration for the control-variate gradient step, with the remaining queries allocated to the estimation primitives. The framework extends directly to a batch of $b$ independent directions $u_t^1,\ldots,u_t^b \iid \cN(0,I_d)$. From Lemmas~\ref{lem:gaussian-smoothing} and~\ref{lem:cv-estimator},
\begin{align*}
    \E\!\sbr{\left.\left\|\tfrac{1}{b}\tsum_{i=1}^b G(x_t,m_t;u_t^i)-\nabla f(x_t)\right\|^2\,\right|\,x_t,m_t} \le \tfrac{4d}{b}\|m_t-\nabla f(x_t)\|^2+24\smp^2L^2d^3.
\end{align*}
Thus batching reduces estimator's variance by a factor of $b$, up to Gaussian-smoothing bias. The convergence analysis can be modified accordingly with a proportional improvement in the convergence bound.
\end{remark}

\section{Simulations}\label{sec:simulations}

We evaluate CV-ZOD on two simulation-based optimization tasks where a cheap surrogate provides directional hints of varying quality: a fluid inverse problem (Section~\ref{sec:exp-fluid}) and molecular geometry optimization (Section~\ref{sec:exp-molecular}). In both, the optimization updates access the expensive objective $f$ through function evaluations, while the differentiable surrogate $f'$ supplies rank-one hint subspaces $\cS_t = \spn\{\nabla f'(x_t)\}$. We compare against classical zeroth-order descent (ZOD) \citep{ghadimi2013stochastic}, which uses no hints, and Guided Evolutionary Strategies (GES) \citep{maheswaranathan2019guided}, which incorporates the hint subspace through a biased estimator. All these methods share the same per-iteration query budget. For reference, we also include surrogate gradient descent (SurGD), which takes gradient steps along $\nabla f'(x_t)$, effectively optimizing the surrogate rather than $f$. 
\vspace{0.2cm}
\begin{remark}[Implementation of the adaptive step size]\label{rmk:adaptive-exp}
In practice, we found it more stable and query-efficient to smooth the norm estimates over time using exponential discounting. Let
$a_t:=N_f^\smp(x_t,\R^d,N_{\mathrm{full}})$ and
$b_t:=N_f^\smp(x_t,\cS_t^\perp,N_{\perp})$, where
$N_{\mathrm{full}}$ and $N_{\perp}$ are the numbers of probes allocated to
the two norm estimates, and fix a discount factor
$\gamma_{\mathrm{disc}}\in[0,1)$. Starting from $A_0=B_0=0$, we update
\begin{align*}
    A_t = \gamma_{\mathrm{disc}} A_{t-1}+a_t,
    \qquad
    B_t = \gamma_{\mathrm{disc}} B_{t-1}+b_t,
    \qquad
    \alpha_t^{\mathrm{exp}} = \frac{A_t}{B_t}.
\end{align*}
In our simulations, we use this smoothed, recency-biased approximation to the adaptive rule. We support this implementation choice with an ablation experiment in Appendix~\ref{app:discount-ablation}.
\end{remark}

\subsection{Fluid Inverse Problem}\label{sec:exp-fluid}

\textbf{Fluid dynamics model.}
Recovering initial conditions from terminal observations in fluid systems is a classical PDE-constrained inverse problem with applications in geophysics and weather forecasting \citep{gunzburger2003perspectives}. We study a concrete instance: a passive dye is released into a two-dimensional incompressible fluid governed by the Navier--Stokes equations, discretized on an $n \times n$ spatial grid via the semi-Lagrangian advection--projection scheme of \citep{stam1999stable}. The dye density $\rho_s \in \R^{n \times n}$ evolves jointly with a velocity field $v_s \in \R^{n \times n \times 2}$ over $s = 0,\ldots,T_{\mathrm{sim}}$ time steps. Given a fixed initial density $\rho_0$ and a prescribed target $\rho^\star \in \R^{n \times n}$, the goal is to find an initial velocity field $v_0 \in \R^{n \times n \times 2}$ that minimizes $f(v_0) := \tnorm{\rho_{T_{\mathrm{sim}}}(v_0) - \rho^\star}_{\mathrm F}^{2}$,
where $\rho_{T_{\mathrm{sim}}}(v_0)$ is the terminal density obtained by evolving the dynamics from $(\rho_0, v_0)$. We identify the decision variable $v_0$ with a vector $x \in \R^d$ for $d = 2n^2$ and view $f\colon\R^d\to\R$ accordingly.

\textbf{Oracle and surrogate.}
To simulate the dynamics, we use the differentiable fluid solver provided by NVIDIA Warp \citep{warp2022}. The solver supports reverse-mode automatic differentiation, but backpropagating through the full $T_{\mathrm{sim}}$-step rollout naively requires storing all intermediate density and velocity fields, at $O(T_{\mathrm{sim}} \cdot n^2)$ memory cost. Gradient checkpointing \citep{griewank2000revolve} can reduce this to $O(\sqrt{T_{\mathrm{sim}}} \cdot n^2)$ at the cost of additional forward recomputation, but the memory overhead remains substantial for long time horizons. Forward passes, used by our algorithm as zeroth-order oracle calls, require no intermediate storage.
To construct the directional hints, we use a coarse surrogate $f'\colon\R^d\to\R$ defined by the same solver over a shorter time horizon $T_{\mathrm{sur}} \ll T_{\mathrm{sim}}$. Since $T_{\mathrm{sur}}$ is small, backpropagation through $f'$ is cheap.

\textbf{Experimental setup.}
We set the grid size to $n=64$ ($d=2n^2=8192$), the simulation horizon to $T_{\mathrm{sim}}=64$ steps, and the surrogate horizon to $T_{\mathrm{sur}}=16$ steps. Each zeroth-order method receives a per-iteration budget of $16$ function evaluations. ZOD and GES use all $16$ queries for batching, while CV-ZOD allocates $12$ to the batched control-variate gradient step, $2$ to the gradient projection (Algorithm~\ref{alg:projection}), and $2$ to the norm estimates used by the adaptive step-size rule (Remark~\ref{rmk:adaptive-exp}). Appendix~\ref{app:discount-ablation} reports ablations over the discount factor and query allocation for this fluid problem; we present the best-performing configuration here.

We use $15$ paired seeds and run each trajectory for $100$ optimization steps. All runs start from a low-variance Gaussian perturbation of zero initial velocity to avoid the nondifferentiable stationary point. The initial density $\rho_0$ and target $\rho^\star$ are a vertical and horizontal rectangle, respectively, centered on the grid (Figure~\ref{fig:outputs}, right).

\begin{figure}[H]
    \centering
    \resizebox{0.8\linewidth}{!}{%
        \includegraphics[width=\linewidth]{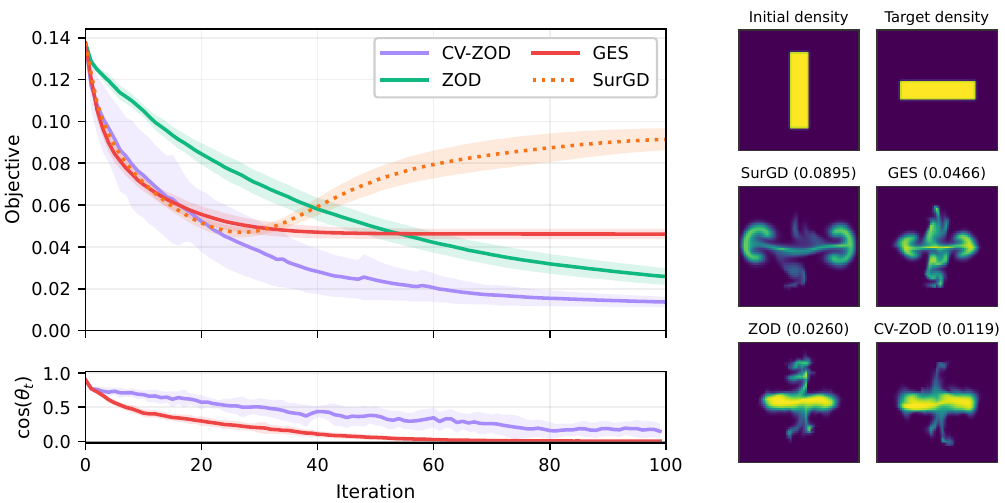}%
    }
    \caption{\textbf{Fluid inverse problem.} Left: objective averaged over $15$ runs (top) and cosine
    similarity between the surrogate and true gradients averaged over the
    same runs (bottom). Right: sample terminal density fields produced by
    each method, alongside the initial density and target.}
    \label{fig:outputs}
    \label{fig:fluid-terminal}
\end{figure}

\textbf{Results.} Figure~\ref{fig:outputs} (left) shows the objective value averaged over $15$ paired runs, alongside $\cos(\theta_t)$ for CV-ZOD and GES. Figure~\ref{fig:outputs} (right) shows terminal density fields from a common seed. Early in optimization, when the surrogate gradient is well-aligned with the true gradient, CV-ZOD descends at a rate comparable to GES and surrogate descent. As the alignment deteriorates, GES plateaus and SurGD loses much of its early improvement. CV-ZOD, by contrast, transitions smoothly to zeroth-order exploration and reaches the lowest final objective.

\subsection{Molecular Geometry Optimization}\label{sec:exp-molecular}

\textbf{Conformation problem and energy surrogates.}
Finding low-energy molecular conformations is a central task in computational chemistry \citep{hawkins2017conformation}. Given a molecule with $N$ atoms, the goal is to find atomic positions $x \in \R^{3N}$ minimizing the potential energy. In practice, the true energy surface is accessed through expensive quantum-mechanical calculations. For our experiments, we substitute the MACE-OFF23 machine-learned interatomic potential \citep{kovacs2025maceoff} as a realistic proxy, setting $f(x) := E_{\mathrm{MACE}}(x)$ and treating it as a zeroth-order oracle. As surrogate, we use the MMFF94 classical force field \citep{halgren1996merck}, 
an analytic potential with closed-form gradients. Since MACE-OFF23 is itself a differentiable PyTorch model, its gradients are available as a diagnostic, allowing us to compute the true principal angle $\angle(\nabla f(x_t), \cS_t)$ along each trajectory for post-hoc analysis.

\textbf{Experimental setup.}
We select three molecules spanning a range of dimensions ($d=3N$): efavirenz ($N=30$), adenosine ($N=32$), and benzylpenicillin ($N=41$). Starting geometries are taken from Wiggle150, a benchmark of highly strained molecular conformations \citep{brew2025wiggle150}, with ten conformations per molecule. Each trajectory runs for $100$ optimization steps, with starting coordinates and seeds shared across the algorithms and the SurGD reference. We test the same algorithms with the same query allocations as in the fluid experiments (Section~\ref{sec:exp-fluid}).

\textbf{Results.} Figure~\ref{fig:wiggle} shows energy changes and running cosine similarities for each molecule, averaged over $10$ paired runs. In this setting, the surrogate gradient retains useful alignment with the true gradient throughout optimization, and both CV-ZOD and GES outperform unguided ZOD. The SurGD reference steadily reduces the true energy, illustrating the quality of the surrogate. CV-ZOD achieves a similar energy reduction and outperforms GES.

\begin{figure}[H]
    \centering
    \includegraphics[width=\linewidth]{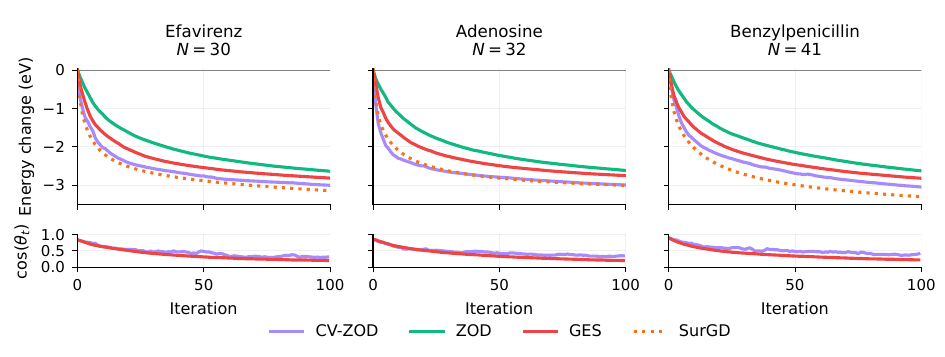}
    \caption{\textbf{Molecular geometry optimization.} MACE energy change from
    initialization (top; lower is better) and cosine similarity between surrogate and true gradients along
    CV-ZOD and GES trajectories (bottom) for efavirenz ($d=90$), adenosine ($d=96$), and benzylpenicillin ($d=123$).
    Curves average ten paired runs, with shared vertical scales
    across molecules. CV-ZOD remains close to the surrogate-only
    reference (SurGD, dotted).}
    \label{fig:wiggle}
\end{figure}

\section{Discussion and Future Work}

We introduced CV-ZOD, a framework for incorporating directional hints into zeroth-order optimization through control-variate gradient estimation. The key property of the framework is that hints affect only the variance of the gradient estimator, not its bias, so the optimization trajectory tracks the true objective regardless of hint quality. The adaptive step-size rule matches the oracle rate up to logarithmic factors without requiring any prior knowledge of the alignment between the hints and the gradient along the trajectory, at a per-iteration query cost of only $O(k+\log T)$.

Several directions remain open. First, the current framework processes a single low-dimensional hint subspace per iteration. In practice, one may have access to multiple surrogates whose gradients accumulate over iterations; naively taking their span as the hint subspace causes the dimension $k$ to grow, eroding low-dimensionality and increasing the required number of queries. A natural question is whether one can adaptively aggregate a fixed number of surrogate gradients in an optimal way---for instance, through an online learning scheme over the surrogates. Second, the most compelling test of the framework's scalability would be in large-scale settings such as memory-efficient fine-tuning or prompt optimization of language models, where zeroth-order methods are already in use and surrogate gradients from smaller or distilled models arise naturally. Validating CV-ZOD in these high-dimensional regimes is an important direction for future work.

\bibliography{refs}

\newpage

\appendix

\section{Prior Work on Zeroth-Order Optimization with Directional Hints}\label{app:related-work}

This section analyzes the gradient-estimation techniques used by the two prior approaches for incorporating directional guidance into zeroth-order optimization: Guided Evolutionary Strategies (GES) \citep{maheswaranathan2019guided} and Prior-Guided Random Gradient-Free (PRGF) methods \citep{cheng2021convergence}. Both reduce exploration variance along a guided subspace by modifying the search distribution, but in doing so introduce bias into the gradient estimator. We show that this bias creates a fundamental step-size dilemma that the control-variate construction in CV-ZOD avoids.

\paragraph{Guided Evolutionary Strategies.}

The approach of \citep{maheswaranathan2019guided} replaces isotropic perturbations with anisotropic ones whose covariance allocates more variance to directions in the hint subspace $\cS \in \Gr_k(\R^d)$. Concretely, one uses the same estimator $\Ges(x;u)$, but samples perturbations according to
\begin{align*}
    u \sim \cN(0, \Sigma),
    \qquad \text{where }\,\,
    \Sigma := \gamma\, I_d + (1-\gamma)\, \tfrac{d}{k} P_{\cS},
\end{align*}
with $\gamma \in [0,1]$ interpolating between isotropic exploration $(\gamma = 1)$ and exploration supported entirely on $\cS$ $(\gamma = 0)$, while preserving the total variance $\tr(\Sigma) = d$.

The main drawback is that this introduces an anisotropic bias. Defining $f^\smp_\Sigma(x) := \E_{u \sim \cN(0, \Sigma)}[f(x + \smp u)]$, a direct calculation shows that
\begin{align*}
    \E_{u \sim \cN(0, \Sigma)}[\Ges(x; u)]
    =
    \Sigma\, \nabla f^\smp_\Sigma(x)
    \,\,\xrightarrow{\smp \to 0}\,\,
    \Sigma \nabla f(x)
    =
    \gamma\, \nabla f(x) + (1-\gamma)\, \tfrac{d}{k} P_{\cS} \nabla f(x).
\end{align*}
Thus the estimator no longer targets $\nabla f(x)$ itself, but applies different gains to its $\cS$ and $\cS^\perp$ components, amplifying the former by a factor of order $d/k$. This creates a fundamental step-size dilemma: a step size chosen to remain stable along $\cS$ is overly conservative on $\cS^\perp$, while one tuned for $\cS^\perp$ is too aggressive along $\cS$. Anisotropic exploration does not simply reduce variance; it changes the optimization geometry itself, making it ill-suited as a robust mechanism for incorporating directional hints whose alignment with the gradient may vary across iterations.

\paragraph{Prior-Guided Random Gradient-Free methods.}

\citep{cheng2021convergence} propose a related approach in a more restrictive setting. Their guidance takes the form of a single vector $p \in \R^d$, or equivalently the one-dimensional subspace $\cS=\spn\{p\}$, and they assume access to directional derivatives $\pair{\nabla f(x),u}$ rather than zeroth-order feedback $f(x)$. Their estimator with batch size $q \in \N$ can be written as
\begin{align*}
    G_{\mathrm{PRGF}}(x,\cS;\{u_i\}_{i=1}^q)
    =
    P_{\cS}\nabla f(x)
    +
    \sum_{i=1}^q \pair{\nabla f(x),u_i}u_i,
\end{align*}
where $u_1,\ldots,u_q$ follow a Haar-uniform orthonormal $q$-frame in $\cS^\perp$.
This estimator is generally biased:
\begin{align*}
    \E_U\sbr{G_{\mathrm{PRGF}}(x,\cS;U)}
    &=
    P_{\cS}\nabla f(x) + \tfrac{q}{d-1} P^{\perp}_{\cS}\nabla f(x)
    = \tfrac{q}{d-1}\, \nabla f(x) + \rbr{1- \tfrac{q}{d-1}} P_{\cS} \nabla f(x).
\end{align*}
As with GES, the estimator applies different effective gains to the $\cS$ and $\cS^\perp$ gradient components.

\paragraph{Summary.} Both GES and PRGF incorporate directional guidance by modifying the search distribution, which unavoidably biases the gradient estimator. The bias introduces two coupled scales in the update — one along $\cS$ and one along $\cS^\perp$ — that cannot be simultaneously controlled by a single step size. By contrast, the control-variate estimator in CV-ZOD remains unbiased for all reference vectors, so the step size controls a single scale (the residual variance), enabling the adaptive tuning developed in Section~\ref{sec:results}.

\section{Ablations of the Discount Factor and Query Allocation}\label{app:discount-ablation}

Using the same fluid setup as in Section~\ref{sec:exp-fluid}, we vary
$\gamma_{\mathrm{disc}}\in\{0,0.6,0.7,0.8,0.9\}$ and compare query
partitions $12/2/2$, $7/7/2$, and $2/12/2$.

\begin{table}[H]
    \centering
    \small
    \caption{Final fluid objective: mean $\pm$ sample standard deviation
    over $15$ paired seeds. Columns give query allocations
    (update / norm / projection). Lower is better.}
    \label{tab:discount-ablation}
    \resizebox{0.80\textwidth}{!}{%
    \renewcommand{\arraystretch}{1.12}%
    \begin{tabular}{|c|c|c|c|c|}
        \hline
        & \multicolumn{3}{c|}{CV-ZOD} & ZOD \\
        \cline{2-5}
        $\gamma_{\mathrm{disc}}$
        & $12/2/2$ & $7/7/2$ & $2/12/2$ & $16/0/0$ \\
        \hline
        $0$
        & $0.0764 \pm 0.0219$
        & $0.0686 \pm 0.0274$
        & $0.0846 \pm 0.0003$
        & $0.0258 \pm 0.0040$ \\
        $0.6$
        & $0.0149 \pm 0.0029$
        & $0.0169 \pm 0.0041$
        & $0.0797 \pm 0.0149$
        & $0.0258 \pm 0.0040$ \\
        $0.7$
        & $\mathbf{0.0136} \pm 0.0025$
        & $0.0153 \pm 0.0028$
        & $0.0796 \pm 0.0137$
        & $0.0258 \pm 0.0040$ \\
        $0.8$
        & $0.0153 \pm 0.0037$
        & $0.0149 \pm 0.0030$
        & $0.0749 \pm 0.0197$
        & $0.0258 \pm 0.0040$ \\
        $0.9$
        & $0.0176 \pm 0.0063$
        & $0.0150 \pm 0.0022$
        & $0.0759 \pm 0.0175$
        & $0.0258 \pm 0.0040$ \\
        \hline
    \end{tabular}%
    }
\end{table}

Table~\ref{tab:discount-ablation} shows that temporal smoothing has its
largest benefit when enough queries remain for the control-variate
update. With the $12/2/2$ allocation, setting
$\gamma_{\mathrm{disc}}=0.7$ reduces the mean final objective from
$0.07635$ to $0.01363$; all $15$ runs finish below $0.020$.
For both $12/2/2$ and $7/7/2$, every displayed positive discount factor
gives a lower mean final objective than ZOD. Increasing the norm-estimation
allocation to $12$ leaves only two update directions and gives much poorer
results throughout the sweep. Among the displayed configurations, the
$12/2/2$ allocation with $\gamma_{\mathrm{disc}}=0.7$ has both the lowest
mean final objective and the earliest mean crossing of $0.020$, at update
$61$. We use this setting for the fluid comparison in
Figure~\ref{fig:outputs}; selection and evaluation use the same $15$ seeds.

\section{Proofs for Section~\ref{sec:gradient-estimation}: CV-ZOD Framework}\label{app:CV-ZOD}

This section contains the proofs of Lemma~\ref{lem:cv-estimator} and Theorem~\ref{thm:main-theorem}. The proof of Theorem~\ref{thm:main-theorem} relies on two concentration results for self-normalized martingales (Theorem~\ref{thm:mgf-concentration} and Corollary~\ref{cor:peeling}), which adapt the exponential inequalities of \citep{fan2015exponential} to our setting via a peeling argument; these are stated and proved after the main argument.

\subsection{Proof of Lemma~\ref{lem:cv-estimator}}

\restatelemma{lem:cv-estimator}{
    For all $x,m\in\R^d$, it holds that $\E_{u\sim\cN(0,I_d)}\sbr{\Gges(x,m;u)} = \nabla f^\smp(x)$.
    Moreover,
    \begin{align*}
        \E_{u\sim\cN(0,I_d)}\sbr{\tnorm{\Gges(x,m;u)-\nabla f(x)}^2}
        \le
        2(d+1)\, \tnorm{m - \nabla f(x)}^2 + 8\smp^2 L^2 d^3
    \end{align*}
}
\begin{proof}
    Fix $x, m \in \R^d$. The unbiasedness follows immediately from Lemma~\ref{lem:classical-estimator} and Fact~\ref{fact:gaussian-outer-product}:
    \begin{align*}
        \E_{u}\sbr{G(x,m;u)} = \E_{u}\sbr{G(x;u)} + \E_u\!\sbr{(I_d - u u^{\top})m} = \nabla f^\smp(x) + 0.
    \end{align*}
    To bound the mean-squared estimation error, we write
    \begin{align*}
        G(x,m;u) - \nabla f(x)
        &= \rbr{I_d - uu^{\top}}\rbr{m - \nabla f(x)}
        + \frac{f(x+\smp u) - f(x) - \pair{\nabla f(x), \smp u}}{\smp}\, u.
    \end{align*}
    Consequently, we have
    \begin{align*}
        \E_u\!\sbr{\tnorm{G(x,m;u) - \nabla f(x)}^2}
        &\le 2(m - \nabla f(x))^{\top} \E_u\! \sbr{(I_d - uu^{\top})^{2}} (m - \nabla f(x))\\
        &+ 2\E_u\!\sbr{\rbr{\tfrac{f(x+\smp u) - f(x) - \pair{\nabla f(x), \smp u}}{\smp}}^2 \tnorm{u}^2}\\
        &\myle{a} 2(d+1) \tnorm{m - \nabla f(x)}^2 + \tfrac{1}{2}\smp^2 L^2\, \E[\tnorm{u}^6]\\
        &\myle{b} 2(d+1)\, \tnorm{m - \nabla f(x)}^2 + 8\smp^2 L^2 d^3,
    \end{align*}
    where (a) follows from $L$-smoothness and Fact~\ref{fact:gaussian-outer-product} and (b) from Fact~\ref{fact:gaussian-power-norm}.
\end{proof}

\subsection{Proof of Theorem~\ref{thm:main-theorem}}

\restatetheorem{thm:main-theorem}{
Suppose Algorithm~\ref{alg:main-algorithm} uses a fresh standard Gaussian direction $u_t$, conditionally on all information available before its draw, for every $t \in [T]$, and that $\max_{t \in [T]} \tnorm{m_t} \le 2B$ almost surely. Then there exist absolute constants $C_0, C > 0$ such that the following holds. For any $\delta \in (0,1)$, set $\iota := \log(2T/\delta)$, and suppose $d \ge \iota$ and $0 < \alpha_t \le \tfrac{1}{C_0\, L\, \iota}$ for all $t \in [T]$. Then, for $\gamma = C_0 \iota$, with probability at least $1 - \delta$,
    \begin{align*}
        \tfrac{1}{L}\tsum_{t=1}^{T} \alpha_t \tnorm{\nabla f(x_t)}^2
        \;\le\; C\,\rbr{D_f^2 + \tfrac{B^2}{L^2} + \smp^2\, T\, d^3 + \cB_T^{\gamma}\,\iota},
    \end{align*}
    where $D_f^2 := \tfrac{2}{L}\rbr{f(x_1) - f^*}$. Moreover, if $d \ge \log(2T^2)$, $\gamma = C_0 \log(2T^2)$, $0<\alpha_t\le 1/(\gamma L)$ for all $t \in [T]$, and $\smp \le \tfrac{B/L}{\sqrt{Td^3}}$, then $\widebar{x}$ satisfies
    \begin{align*}
        \tfrac{1}{L}\,\E\!\sbr{\tnorm{\nabla f(\widebar{x})}^2}
        \;\le\; C\,\E\!\sbr{\tfrac{D_f^2 + B^2/L^2 + \cB^{\gamma}_T \log(2T^2)}{\cA_T}},
    \end{align*}
    where $\cA_T := \tsum_{t=1}^{T} \alpha_t$ denotes the cumulative step size.
}

\begin{proof}
For each $t \in [T]$, let $\cF_t$ contain all information available before iteration $t$, including the current iterate and hint, and let $\cF_{T+1}$ contain all information after the final iteration. Let $\cG_t$ additionally contain all information available when $\alpha_t$ and $m_t$ have been chosen, just before drawing $u_t$. These sigma-algebras are nested as $\cF_t\subseteq\cG_t\subseteq\cF_{t+1}$.
Throughout, set $g_t := \nabla f^\smp(x_t)$, $w_t := m_t - \nabla f(x_t)$, $\beta_t := \alpha_t - L\alpha_t^2$, $G_t := \Gges(x_t,m_t;u_t)$, and $\Delta_t := G_t - g_t$. Thus $x_t$, $\alpha_t$, and $m_t$ are $\cG_t$-measurable, while $u_t$ is conditionally standard Gaussian given $\cG_t$. Hence Lemma~\ref{lem:cv-estimator} gives $\E[\Delta_t \mid \cG_t] = 0$. Since $\iota \ge \log 2$ and we take $C_0 \ge 4$, the cap $\alpha_t \le \tfrac{1}{C_0 L\iota} \le \tfrac{1}{2L}$ implies $\beta_t \in [\tfrac{1}{2}\alpha_t,\alpha_t]$. We also write
\begin{align*}
    S_T := \tsum_{t=1}^T\alpha_t\tnorm{\nabla f(x_t)}^2,
    \qquad
    V_T := d\tsum_{t=1}^T\alpha_t^2\tnorm{w_t}^2,
    \qquad
    H := D_f^2+B^2/L^2.
\end{align*}

We record two approximation bounds from Lemma~\ref{lem:gaussian-smoothing} used repeatedly:
\begin{align}
    \sup_{x \in \R^d} \tnorm{\nabla f^\smp(x) - \nabla f(x)}
    &\le c_0\, \smp L d^{3/2}, \label{eq:hp-smooth-grad}\\
    f^\smp(x_1) - f^*
    &\le \tfrac{L}{2}D_f^2 + \tfrac{\smp^2 Ld}{2}. \label{eq:hp-smooth-val}
\end{align}

\paragraph{Step 1: Descent inequality.}
By $L$-smoothness of $f^\smp$ and $G_t = g_t + \Delta_t$,
\begin{align*}
    f^{\smp}(x_{t+1})
    &\le f^{\smp}(x_t) - \alpha_t \pair{g_t, G_t} + \tfrac{L\alpha_t^2}{2}\tnorm{G_t}^2\\
    &= f^{\smp}(x_t)
    - \underbrace{(\alpha_t - \tfrac{L\alpha_t^2}{2})}_{\ge\,\alpha_t/2}\,\tnorm{g_t}^2
    - \underbrace{(\alpha_t - L\alpha_t^2)}_{=\,\beta_t}\,\pair{g_t, \Delta_t}
    + \tfrac{L\alpha_t^2}{2}\tnorm{\Delta_t}^2.
\end{align*}
Telescoping over $t = 1, \ldots, T$ and using $f^\smp(x_{T+1}) \ge f^*$ with~\eqref{eq:hp-smooth-val},
\begin{align}\label{eq:hp-descent}
    \tfrac{1}{2}\tsum_{t=1}^T \alpha_t \tnorm{g_t}^2
    \le \tfrac{L}{2}D_f^2 + \tfrac{\smp^2 Ld}{2} - M_T + \tfrac{L}{2}\,Q_T,
\end{align}
where $M_T := \sum_{t=1}^T \beta_t\pair{g_t, \Delta_t}$ and $Q_T := \sum_{t=1}^T \alpha_t^2\tnorm{\Delta_t}^2$.

\paragraph{Step 2: Pathwise bound on $Q_T$.}

The proof of Lemma~\ref{lem:cv-estimator} gives
\begin{align*}
    G_t - \nabla f(x_t) = (I_d - u_tu_t^\top)\,w_t + \epsilon_t\, u_t,
    \qquad
    |\epsilon_t| \le \tfrac{L\smp}{2}\tnorm{u_t}^2.
\end{align*}
Since $\Delta_t = (G_t - \nabla f(x_t)) + (\nabla f(x_t) - g_t)$, we have
\begin{align}\label{eq:hp-Delta-decomp}
    \tnorm{\Delta_t}^2
    \le 2\tnorm{(I_d - u_tu_t^\top)w_t + \epsilon_t u_t}^2 + 2c_0^2\,\smp^2 L^2 d^3.
\end{align}

We establish pathwise control on $u_t$. Since $\tnorm{u_t}^2 \sim \chi_d^2$ conditioned on $\cG_t$, the Laurent--Massart inequality (Lemma~\ref{lem:laurent-massart}) gives $\Pr(\tnorm{u_t}^2 > d + 2\sqrt{d\ell'} + 2\ell') \le e^{-\ell'}$ for all $\ell' > 0$. Similarly, $\pair{w_t, u_t}\mid\cG_t \sim \cN(0, \tnorm{w_t}^2)$, so the standard Gaussian tail yields $\Pr(|\pair{w_t, u_t}| > \tnorm{w_t}\sqrt{2\ell'}) \le 2e^{-\ell'}$. Set $\ell := \log(12T/\delta)\le 4\iota$. A union bound over $t \in [T]$ gives total failure probability at most $3Te^{-\ell}=\delta/4$. Thus the following event $\cE$ holds with probability at least $1-\delta/4$: for all $t \in [T]$ simultaneously,
\begin{align}\label{eq:hp-event}
    \tnorm{u_t}^2 \le 4(d + \ell)
    \qquad \text{and} \qquad
    \pair{w_t, u_t}^2 \le 2\tnorm{w_t}^2\,\ell.
\end{align}
We work on $\cE$ for the remainder of the step.

\emph{Bounding the two components.} Expanding and applying~\eqref{eq:hp-event}:
\begin{align*}
    \tnorm{(I_d - u_tu_t^\top)w_t}^2
    &= \tnorm{w_t}^2 - 2\pair{w_t,u_t}^2 + \pair{w_t,u_t}^2\tnorm{u_t}^2\\
    &\le \tnorm{w_t}^2 + 8\,\tnorm{w_t}^2\,\ell\,(d + \ell)\\
    &\le c\,\tnorm{w_t}^2\, d\,\iota,
\end{align*}
and
\begin{align*}
    \epsilon_t^2\tnorm{u_t}^2
    \le \tfrac{L^2\smp^2}{4}\tnorm{u_t}^6
    \le c\,L^2\smp^2 d^3.
\end{align*}
Substituting into~\eqref{eq:hp-Delta-decomp},
\begin{align}\label{eq:hp-Delta-bound}
    \tnorm{\Delta_t}^2
    \le c\, d\,\iota\,\tnorm{w_t}^2 + c\,\smp^2 L^2 d^3.
\end{align}
Multiplying by $\alpha_t^2$ and summing:
\begin{align*}
    Q_T
    \le c\,d\,\iota\tsum_{t=1}^T \alpha_t^2\tnorm{w_t}^2
    + c\,\smp^2 L^2 d^3 \tsum_{t=1}^T \alpha_t^2.
\end{align*}

Since $\sum_t\alpha_t^2\le T/L^2$, this gives
\begin{align}\label{eq:hp-QT}
    Q_T \le c\,\iota V_T+c\,\smp^2 T d^3.
\end{align}
We retain the nonnegative quantity $V_T$ until the final step, where we use the identity $\iota\cB_T^\gamma=\iota V_T-S_T/(C_0L)$.

\paragraph{Step 3: Martingale concentration.}

\emph{Dominant martingale.} Define
\begin{align*}
    \xi_t := \beta_t\rbr{\pair{\nabla f(x_t), w_t} - \pair{\nabla f(x_t), u_t}\pair{w_t, u_t}}.
\end{align*}
Since $\beta_t, \nabla f(x_t), w_t$ are $\cG_t$-measurable and $\E[\pair{\nabla f(x_t), u_t}\pair{w_t, u_t} \mid \cG_t] = \pair{\nabla f(x_t), w_t}$, each $\xi_t$ is conditionally mean zero. It is measurable with respect to $\cF_{t+1}$, so the partial sums of $\widetilde M_T := \sum_{t=1}^T \xi_t$ form a martingale with respect to the post-draw filtration $(\cG_{t+1})_{t=0}^T$, where $\cG_{T+1}:=\cF_{T+1}$. By Lemma~\ref{lem:Isserlis} with $a = \nabla f(x_t)$ and $\Sigma = w_t w_t^\top$,
\begin{align*}
    \E[\xi_t^2 \mid \cG_t] = \beta_t^2\bigl(\tnorm{\nabla f(x_t)}^2\tnorm{w_t}^2 + \pair{\nabla f(x_t), w_t}^2\bigr) < \infty.
\end{align*}

\emph{MGF condition.} Conditioned on $\cG_t$, $\xi_t$ is a centered degree-two polynomial in $u_t \sim \cN(0, I_d)$. By the Hanson--Wright inequality, there exists an absolute constant $c_1 > 0$ such that
\begin{align*}
    \log \E\!\sbr{e^{\lambda \xi_t} \mid \cG_t} \le c_1^2\lambda^2\beta_t^2\tnorm{\nabla f(x_t)}^2\tnorm{w_t}^2
    \qquad \text{for } |\lambda| \le \tfrac{1}{c_1 \beta_t \tnorm{\nabla f(x_t)}\tnorm{w_t}}.
\end{align*}
Since $e^x \le 1 + 2x$ for $x \in [0,1]$ and $\E[\xi_t^2 \mid \cG_t] \ge \beta_t^2\tnorm{\nabla f(x_t)}^2\tnorm{w_t}^2$ by Lemma~\ref{lem:Isserlis},
\begin{align*}
    \E\!\sbr{e^{\lambda \xi_t} \mid \cG_t}
    \le 1 + 2c_1^2\lambda^2\,\E[\xi_t^2 \mid \cG_t]
    \qquad \text{for } |\lambda| \le \tfrac{1}{c_1 \beta_t \tnorm{\nabla f(x_t)}\tnorm{w_t}}.
\end{align*}
If $\beta_t\tnorm{\nabla f(x_t)}\tnorm{w_t}=0$, then $\xi_t=0$ and the MGF bound holds for every $\lambda$. Otherwise the displayed range applies. The boundedness assumptions give $\tnorm{w_t}\le 3B$ and hence $c_1\beta_t\tnorm{\nabla f(x_t)}\tnorm{w_t} \le 3c_1B^2/(C_0L\iota)$. Set
\begin{align*}
    \varepsilon := \frac{\max\{3c_1,18\}\,B^2}{C_0L\iota}.
\end{align*}
The hypothesis of Theorem~\ref{thm:mgf-concentration} then holds with this $\varepsilon$ and its constant $c_0=4c_1^2$.

\emph{Bounding $\widetilde M_T$.} Using $\beta_t \le \alpha_t$, $\tnorm{w_t} \le 3B$, and $\alpha_t \le 1/(C_0 L\iota)$:
\begin{align*}
    \E[\xi_t^2 \mid \cG_t]
    \le 18\alpha_t^2\tnorm{\nabla f(x_t)}^2 B^2
    \le \tfrac{18B^2}{C_0 L\iota}\cdot \alpha_t\tnorm{\nabla f(x_t)}^2
    \le \varepsilon\cdot \alpha_t\tnorm{\nabla f(x_t)}^2.
\end{align*}
Setting $r_t := \alpha_t\tnorm{\nabla f(x_t)}^2$, Corollary~\ref{cor:peeling} gives, with probability at least $1 - \delta/2$,
\begin{align}\label{eq:hp-tilde-mart}
    |\widetilde M_T| \le \tfrac{1}{8}\tsum_{t=1}^T \alpha_t\tnorm{\nabla f(x_t)}^2 + \tfrac{c\,B^2}{L},
\end{align}
where we used $\varepsilon\log(2e/\delta)\le cB^2/L$, since $\log(2e/\delta)\le c\iota$.

\emph{Smoothing residual.} Since $\Delta_t = (I_d - u_tu_t^\top)w_t + \epsilon_t u_t + (\nabla f(x_t)-g_t)$,
\begin{align*}
    \beta_t\pair{g_t, \Delta_t} - \xi_t
    = \beta_t\pair{g_t - \nabla f(x_t),\, \Delta_t}
    + \beta_t\pair{\nabla f(x_t),\, (\nabla f(x_t)-g_t) + \epsilon_t u_t}.
\end{align*}
On $\cE$, using~\eqref{eq:hp-smooth-grad}, \eqref{eq:hp-Delta-bound}, $|\epsilon_t| \le cL\smp(d+\iota)$, $\tnorm{u_t} \le c\sqrt{d+\iota}$, and $\beta_t \le \alpha_t \le 1/(C_0 L\iota)$:
\begin{align*}
    |\beta_t\pair{g_t, \Delta_t} - \xi_t|
    \le c\,\alpha_t\smp L d^{3/2}\tnorm{\Delta_t}
    + c\,\alpha_t \tnorm{\nabla f(x_t)}\smp L(d + \iota)^{3/2}.
\end{align*}
Applying Young's inequality to the first term with quadratic contribution $L\alpha_t^2\tnorm{\Delta_t}^2/2$ and to the second with contribution $\alpha_t\tnorm{\nabla f(x_t)}^2/32$, then summing and using $\sum_t\alpha_t\le T/L$, gives
\begin{align}\label{eq:hp-residual}
    |M_T - \widetilde M_T|
    \le \tfrac{L}{2}\,Q_T
    + \tfrac{1}{32}S_T+c\,\smp^2 T d^3 L.
\end{align}
Combining~\eqref{eq:hp-tilde-mart} and~\eqref{eq:hp-residual}:
\begin{align}\label{eq:hp-martingale}
    |M_T| \le \tfrac{5}{32}S_T
    + \tfrac{c\,B^2}{L}
    + \tfrac{L}{2}\,Q_T
    + c\,\smp^2 T d^3 L.
\end{align}

\paragraph{Step 4: Combining.}

Intersecting $\cE$ with the martingale event gives an event $\cH$ with probability at least $1-3\delta/4\ge 1-\delta$. On $\cH$, substituting~\eqref{eq:hp-martingale} into~\eqref{eq:hp-descent} and using $-M_T\le |M_T|$ gives
\begin{align*}
    \tfrac{1}{2}\tsum_{t=1}^T \alpha_t\tnorm{g_t}^2
    \le \tfrac{L}{2}D_f^2 + \tfrac{c\,B^2}{L}
    + \tfrac{5}{32}S_T
    + L\,Q_T
    + c\,\smp^2 T d^3 L.
\end{align*}
The gradient approximation~\eqref{eq:hp-smooth-grad} and $\sum_t\alpha_t\le T/L$ imply
\begin{align*}
    \tfrac{1}{2}\tsum_{t=1}^T \alpha_t\tnorm{g_t}^2
    \ge \tfrac{1}{4}S_T-c\,\smp^2 T d^3 L.
\end{align*}
Consequently,
\begin{align*}
    \tfrac{3}{32}S_T
    \le \tfrac{L}{2}D_f^2+\tfrac{cB^2}{L}+LQ_T+c\,\smp^2 T d^3 L.
\end{align*}
Applying~\eqref{eq:hp-QT} and dividing by $L$ therefore gives an absolute constant $K\ge 1$, independent of $C_0$ once $C_0\ge4$, such that
\begin{align}\label{eq:hp-gt-bound}
    \tfrac{S_T}{L}\le K\rbr{H+\smp^2 T d^3+\iota V_T}.
\end{align}

Set $K_2:=2K+1$, and choose $C_0\ge\max\{4,2K_2\}$ and $C:=2K_2$. The balance identity and~\eqref{eq:hp-gt-bound} give, on $\cH$,
\begin{align*}
    H+\smp^2 T d^3+\iota\cB_T^\gamma
    &=H+\smp^2 T d^3+\iota V_T-\tfrac{S_T}{C_0L}\\
    &\ge (1-K/C_0)\rbr{H+\smp^2 T d^3+\iota V_T}\\
    &\ge \tfrac{1}{2}\rbr{H+\smp^2 T d^3+\iota V_T}.
\end{align*}
Combining this inequality with~\eqref{eq:hp-gt-bound} proves the high-probability conclusion with the chosen $C$.

For the expected conclusion, first suppose $T\ge2$ and apply the preceding argument with $\delta=1/T$, so $\iota=\log(2T^2)$. Since the step sizes are positive, $\cA_T>0$. Define
\begin{align*}
    Y:=\frac{S_T}{L\cA_T},
    \qquad X:=\frac{H+\iota V_T}{\cA_T},
    \qquad W:=\frac{H+\iota\cB_T^\gamma}{\cA_T}
             =X-\frac{Y}{C_0}.
\end{align*}
The output rule gives $\E Y=L^{-1}\E\tnorm{\nabla f(\widebar x)}^2$. The smoothing condition implies $\smp^2 T d^3\le B^2/L^2$, so~\eqref{eq:hp-gt-bound} gives $Y\le2KX$ on $\cH$. Moreover, $X\ge0$ and $0\le Y\le B^2/L$ everywhere. Taking expectations while retaining the nonnegative $X$, we obtain
\begin{align*}
    \E Y
    &\le 2K\E X+\frac{B^2}{LT}
     \le K_2\E X.
\end{align*}
The last inequality uses $\cA_T\le T/(C_0\iota L)$, which implies $X\ge B^2/(L^2\cA_T)\ge B^2/(LT)$. If $\E X<\infty$, then
\begin{align*}
    \E W=\E X-\frac{\E Y}{C_0}
    \ge (1-K_2/C_0)\E X\ge\tfrac12\E X.
\end{align*}
Thus $\E Y\le C\E W$, as claimed. If $\E X=\infty$, the same conclusion holds in the extended sense because $Y$ is bounded and $W\ge-B^2/(C_0L)$.

Finally, when $T=1$, the output is $x_1$. With $\iota=\log2$ and the same definitions of $Y,X,W$, positivity and the step-size cap give
\begin{align*}
    W\ge\frac{B^2}{L^2\alpha_1}-\frac{Y}{C_0}
      \ge\rbr{C_0\log2-\frac1{C_0}}\frac{B^2}{L}
      \ge Y.
\end{align*}
This proves the expected conclusion for every $T\in\N$ and completes the proof.
\end{proof}

\begin{theorem}\label{thm:mgf-concentration}
Let $(S_k, \cF_k)_{k=0,\ldots,n}$ be a martingale with $S_0 = 0$ and differences $\xi_i = S_i - S_{i-1}$. Let $c_0 > 0$. Suppose there exists $\varepsilon > 0$ such that for all $i \in [n]$ and all $|\lambda| \le 1/\varepsilon$, 
\begin{align*}
    \E\!\sbr{e^{\lambda \xi_i} \mid \cF_{i-1}} \le 1 + \tfrac{c_0\lambda^2}{2}\,\E\!\sbr{\xi_i^2 \mid \cF_{i-1}}.
\end{align*}
Then for all $x, w > 0$,
\begin{align*}
    \Pr\!\rbr{S_n \ge x \text{ and } \langle S \rangle_n \le w} \le \exp\!\rbr{-\tfrac{x^2}{2(c_0w + \varepsilon x)}},
\end{align*}
where $\langle S \rangle_n = \sum_{i=1}^n \E[\xi_i^2 \mid \cF_{i-1}]$.
\end{theorem}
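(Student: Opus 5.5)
This is a Freedman/Bernstein-type bound, and I will prove it with the standard exponential supermartingale argument, following the structure of Fan et al.

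\emph{Step 1: one-step compensator.} Fix $\lambda\in(0,1/\varepsilon)$ and write $\sigma_i^2 := \E[\xi_i^2\mid\cF_{i-1}]$. Using $1+y\le e^y$, the hypothesis gives
\[
\E[e^{\lambda\xi_i}\mid\cF_{i-1}]\le \exp\!\rbr{\tfrac{c_0\lambda^2}{2}\sigma_i^2}.
\]
Define
\[
Z_k(\lambda):=\exp\!\rbr{\lambda S_k-\tfrac{c_0\lambda^2}{2}\langle S\rangle_k}.
\]
Since $\langle S\rangle_k$ is predictable, i.e.\ $\sigma_k^2$ is $\cF_{k-1}$-measurable, we get
\[
\E[Z_k\mid\cF_{k-1}]=Z_{k-1}\,e^{-c_0\lambda^2\sigma_k^2/2}\,\E[e^{\lambda\xi_k}\mid\cF_{k-1}]\le Z_{k-1}.
\]
So $Z_k$ is a nonnegative supermartingale with $Z_0=1$, and hence $\E Z_n\le 1$.

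\emph{Step 2: Chernoff bound on the event.} On $\{S_n\ge x,\ \langle S\rangle_n\le w\}$ we have $Z_n\ge \exp(\lambda x-c_0\lambda^2 w/2)$. Markov's inequality then gives
\[
\Pr(S_n\ge x,\ \langle S\rangle_n\le w)\le \exp\!\rbr{-\lambda x+\tfrac{c_0\lambda^2w}{2}}
\]
for every $\lambda\in(0,1/\varepsilon]$. The endpoint $\lambda=1/\varepsilon$ is allowed because the hypothesis holds for $|\lambda|\le 1/\varepsilon$.

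\emph{Step 3: choice of $\lambda$.} The key step is to pick $\lambda$ so that the exponent becomes $-x^2/(2(c_0w+\varepsilon x))$ while respecting the constraint $\lambda\le 1/\varepsilon$. I would take
\[
\lambda=\frac{x}{c_0w+\varepsilon x}.
\]
This satisfies $\lambda\le x/(\varepsilon x)=1/\varepsilon$. With this choice,
\[
\lambda x-\tfrac{c_0\lambda^2 w}{2}=\frac{x^2}{c_0w+\varepsilon x}\rbr{1-\frac{c_0w}{2(c_0w+\varepsilon x)}}\ge \frac{x^2}{c_0w+\varepsilon x}\cdot\frac12,
\]
because $c_0w/(c_0w+\varepsilon x)\le1$. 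That is exactly the claimed exponent.

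The only potential obstacle is integrability in Step 1, which needs the conditional MGF and $\sigma_i^2$ to be finite. For $|\lambda|\le1/\varepsilon$ this is implicit in the hypothesis: if $\sigma_i^2=\infty$ the hypothesis bound is vacuous, but then $\langle S\rangle_n=\infty$ and such outcomes lie outside the event anyway. To be safe, I would define $Z_k$ with the convention $e^{-\infty}=0$, and check that the supermartingale inequality still holds pointwise; it does, since the factor $e^{-c_0\lambda^2\sigma_k^2/2}$ is then zero.
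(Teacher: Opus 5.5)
Your proof is correct and follows the same route as the paper. The paper gets the bound $\exp(-\lambda x + c_0\lambda^2 w/2)$ for $\lambda\in(0,1/\varepsilon)$ by citing Theorem~2.1 of Fan et al.\ (with $g\equiv 0$, $f(\lambda)=c_0\lambda^2/2$); that citation is exactly the exponential-supermartingale and Markov argument you carry out yourself in Steps~1--2, and both proofs then use the same choice $\lambda = x/(c_0 w+\varepsilon x)$ and the same algebra, so yours is simply self-contained (and since $w>0$ this $\lambda$ is strictly below $1/\varepsilon$, so your endpoint remark is not needed).
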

\begin{proof}
We apply Theorem~2.1 of \citep{fan2015exponential} with $g(\lambda) = 0$, $f(\lambda) = \tfrac{c_0\lambda^2}{2}$, $V_{i-1} = \E[\xi_i^2 \mid \cF_{i-1}]$, and $v \to \infty$. Their bound (2.3) gives, for any $\lambda \in (0, 1/\varepsilon)$,
\begin{align*}
    \Pr\!\rbr{S_n \ge x \text{ and } \langle S \rangle_n \le w}
    \le \exp\!\rbr{-\lambda x + \tfrac{c_0\lambda^2}{2} w}.
\end{align*}
Substituting $\lambda = \frac{x}{c_0 w + \varepsilon x} \in (0, 1/\varepsilon)$:
\begin{align*}
    -\lambda x + \frac{c_0\lambda^2}{2} w
    &= -\frac{x^2}{c_0 w + \varepsilon x} + \frac{x^2}{(c_0 w + \varepsilon x)^2} \cdot \frac{c_0 w}{2}\\
    &\le -\frac{x^2}{c_0 w+\varepsilon x} + \frac{x^2}{2(c_0 w+\varepsilon x)}
    = -\frac{x^2}{2(c_0 w+\varepsilon x)},
\end{align*}
which yields the result.
\end{proof}

\begin{corollary}\label{cor:peeling}
Under the conditions of Theorem~\ref{thm:mgf-concentration}, suppose further that there exist non-negative $\cF_{i-1}$-measurable random variables $r_1,\ldots,r_n$ such that
\begin{align*}
    \E\!\sbr{\xi_i^2 \mid \cF_{i-1}} \le \varepsilon\, r_i
    \qquad \text{for all } i \in [n] \text{ almost surely}.
\end{align*}
Let $R=\sum_{i=1}^n r_i$. Then for any $\delta\in(0,1)$, with probability at least
$1-\delta$,
\begin{align*}
    |S_n| \le \frac{R}{8} + c\varepsilon\log(e/\delta),
\end{align*}
where $c>0$ depends only on $c_0$.
\end{corollary}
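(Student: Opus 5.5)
The plan is a standard peeling (stratification) argument over the random value of $R$, applied to the tail bound of Theorem~\ref{thm:mgf-concentration} for both $S_n$ and $-S_n$. First, $-S$ is also a martingale, and its increments $-\xi_i$ satisfy the same MGF hypothesis, because that hypothesis is required for all $|\lambda|\le 1/\varepsilon$. So it suffices to prove the one-sided bound $S_n \le R/8 + c\varepsilon\log(e/\delta)$ with probability at least $1-\delta/2$, and then take a union bound. By hypothesis, $\langle S\rangle_n \le \varepsilon R$ almost surely. This reduces the problem to controlling $S_n$ on events where $R$ lies in a fixed range.

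Set the base scale $a := K\varepsilon\log(e/\delta)$, with a large constant $K$ depending only on $c_0$. Then split the sample space into the event $\{R\le a\}$ and the dyadic shells $E_j:=\{2^{j-1}a < R \le 2^j a\}$ for $j\ge1$.
\begin{itemize}
\item On $\{R\le a\}$, apply Theorem~\ref{thm:mgf-concentration} with $w=\varepsilon a$ and $x=c\varepsilon\log(e/\delta)$. The exponent is $x^2/\bigl(2(c_0\varepsilon a+\varepsilon x)\bigr)$, which is of order $\log(e/\delta)\cdot c^2/(c_0K+c)$. Choosing $c$ large relative to $K$ makes this exponent at least $2\log(e/\delta)$.
\item On $E_j$, apply the theorem with $w=\varepsilon 2^j a$ and $x=2^{j-1}a/8 + c\varepsilon\log(e/\delta)$. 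This $x$ is at most $R/8+c\varepsilon\log(e/\delta)$ on $E_j$. The exponent is then at least of order $x^2/(\varepsilon(c_0 2^j a + x))$. Since $x \gtrsim 2^j a$, this is at least of order $2^j a/\bigl(\varepsilon(16c_0+1)\bigr)$, which equals $2^j K\log(e/\delta)/(16c_0+1)$ up to constants.
\end{itemize}
Choosing $K\ge C(16c_0+1)$ makes the $j$-th failure probability at most $(\delta/e)^{2^j}\cdot(\text{small})$. This is summable in $j$, and the total over all shells is at most $\delta/4$.

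The main obstacle is that Theorem~\ref{thm:mgf-concentration} is a joint event bound with a fixed $w$, while the quantity we want to compare $S_n$ to, namely $R$, is random. The peeling handles this: on each shell we use the deterministic $w=\varepsilon 2^j a$, which dominates $\langle S\rangle_n$ there, and we lower-bound the random $R/8$ by its value $2^{j-1}a/8$ at the shell's bottom. The countably many shells are not a problem because the failure probabilities decay doubly exponentially in $j$.

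The remaining work is bookkeeping. We need to check that all constants depend only on $c_0$, and that $\log(e/\delta)\ge1$ absorbs the union-bound factors, by adjusting $c$.
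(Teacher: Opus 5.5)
Your proposal is correct and follows essentially the same peeling argument as the paper: dyadic shells in $R$, with $w$ taken from the top of the shell and $x$ from the bottom of the shell plus an additive term, then the same argument applied to $-S_n$ and a union bound. The only difference is bookkeeping. You start the shells at scale $K\varepsilon\log(e/\delta)$ rather than at $\varepsilon$, so your per-shell failure probabilities decay like $(\delta/e)^{2^{j}}$. The paper instead factors them as $\exp(-r/(C\varepsilon))$ times a convergent series and then chooses $r = C\varepsilon\log(2C'/\delta)$.
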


\begin{proof}
It suffices to prove the upper-tail bound, since the lower-tail bound follows by
applying the same argument to $-S_n$. For $j\ge 1$, define
\begin{align*}
    E_j=\{2^{j-1}\varepsilon < R \le 2^j\varepsilon\},
    \qquad
    x_j=\frac{2^{j-1}\varepsilon}{8}+r,
    \qquad
    w_j=\varepsilon^2 2^j,
\end{align*}
and also set $E_0=\{R\le \varepsilon\}$, $x_0=r$, and $w_0=\varepsilon^2$.
On $E_j$, we have $\langle S\rangle_n\le \varepsilon R\le w_j$ and
$R/8+r\ge x_j$. Therefore, by Theorem~\ref{thm:mgf-concentration},
\begin{align*}
    \Pr\!\rbr{S_n \ge R/8+r}
    \le
    \sum_{j\ge 0}
    \exp\!\rbr{-\frac{x_j^2}{2(c_0w_j+\varepsilon x_j)}}.
\end{align*}
We now bound the summands. For $j\ge 1$, since
$x_j\ge 2^j\varepsilon/16$, we have
\begin{align*}
    c_0w_j = c_0\varepsilon^2 2^j
    \le 16c_0\varepsilon x_j.
\end{align*}
Hence, for a constant $C>0$ depending only on $c_0$,
\begin{align*}
    \frac{x_j^2}{2(c_0w_j+\varepsilon x_j)}
    \ge
    \frac{x_j}{C\varepsilon}
    \ge
    \frac{2^{j-1}}{C}+\frac{r}{C\varepsilon}.
\end{align*}
For the term $j=0$, after increasing $C$ if necessary, the same bound gives
\begin{align*}
    \frac{x_0^2}{2(c_0w_0+\varepsilon x_0)}
    =
    \frac{r^2}{2(c_0\varepsilon^2+\varepsilon r)}
    \ge
    \frac{r}{C\varepsilon},
\end{align*}
provided $r\ge c_0\varepsilon$, which will be true for the choice of $r$ below.
Thus
\begin{align*}
    \Pr\!\rbr{S_n \ge R/8+r}
    &\le
    \exp\!\rbr{-\tfrac{r}{C\varepsilon}}
    \rbr{
        1+\tsum_{j\ge 1}\exp (-\tfrac{2^{j-1}}{C})
    } \\
    &\le
    C'\exp\!\rbr{-\tfrac{r}{C\varepsilon}},
\end{align*}
where $C'>0$ depends only on $c_0$. Taking
\begin{align*}
    r = C\varepsilon\log(2C'/\delta)
\end{align*}
gives
\begin{align*}
    \Pr\!\rbr{S_n \ge R/8+r} \le \delta/2.
\end{align*}
The same argument applied to $-S_n$ gives the lower tail, and a union bound yields
\begin{align*}
    \Pr\!\rbr{|S_n| \ge R/8+r} \le \delta.
\end{align*}
Finally, increasing the constant once more,
$r\le c\varepsilon\log(e/\delta)$.
\end{proof}

\section{Proof of Theorem~\ref{thm:adaptive-rate}}\label{app:adaptive-results}

This section proves the adaptive convergence results of Section~\ref{sec:results}. We first prove Theorem~\ref{thm:adaptive-rate} using Theorem~\ref{thm:main-theorem} and the estimation primitives, then prove the supporting Lemmas~\ref{lem:projection}, \ref{lem:NE-concentration}, and~\ref{lem:adaptive-tuning}.

\restatetheorem{thm:adaptive-rate}{
There exist absolute constants $C, c > 0$ such that, if $d \ge \log(2T^2)$, Algorithm~\ref{alg:main-algorithm} is run with $m_t = P_f^\smp(x_t,\cS_t)$ and step sizes $\alpha_t$ from \eqref{eq:adaptive-stepsize} with confidence parameter $\delta=1/(2T)$, $\gamma = C_0 \log(2T^2)$, and $N = \ceil{C\log(2T)}$, and the smoothing radius satisfies $\smp \le \frac{B/L}{\sqrt{T(d^3 + dk)\log(2T^2)}}$, then
\begin{align*}
    \tfrac{1}{L}\,\E\!\sbr{\tnorm{\nabla f(\widebar{x})}^2}
        \;\le\; c\,\E\!\sbr{\tfrac{(D_f^2 + B^2/L^2)\log(2T)}{\cA^*_T}},
\end{align*}
where $\cA^*_T = \tsum_{t=1}^T \alpha^*_t$ for $\alpha_t^*$ in \eqref{eq:oracle-stepsize}.
}

\begin{proof}
When $T=1$, the output is $x_1$ and the conclusion follows directly from $\tnorm{\nabla f(x_1)}^2\le B^2$, $\cA_1^*\le1/L$, and $c\ge1/\log2$. We therefore assume $T\ge2$.
Set $\delta := 1/T$ and $\iota := \log(2T/\delta) = \log(2T^2)$, so that
$\gamma = C_0\iota$.

\emph{Step 0: High-probability events.}
Choose the sample constant $C$ at least twice the corresponding constant in Lemma~\ref{lem:adaptive-tuning}; then $N=\ceil{C\log(2T)}$ meets its sample requirement at failure probability $1/(2T)$, since $\log(4T^2)=2\log(2T)$.
Let $\cE_1$ be the event from Lemma~\ref{lem:adaptive-tuning} applied with
failure probability $\delta/2=1/(2T)$, on which local optimality and balance control
hold simultaneously for all $t \in [T]$. By Lemma~\ref{lem:projection},
deterministically,
\begin{align*}
    \|m_t - P_{\cS_t}\nabla f(x_t)\|^2 
    \le \frac{kL^2\smp^2}{4}
    \qquad \text{for all } t \in [T].
\end{align*}
The assumed upper bound on $\smp$ also ensures
\begin{align*}
    \|m_t\|
    \le \|P_{\cS_t}\nabla f(x_t)\| + \frac{L\smp\sqrt{k}}{2}
    \le 2B,
\end{align*}
so the reference-vector condition in Theorem~\ref{thm:main-theorem} holds.
Let $\cE_2$ be the event from that theorem applied with failure probability
$\delta$; its logarithmic parameter is precisely $\iota$. Thus, for
$\cE := \cE_1 \cap \cE_2$, we have $\Pr(\cE) \ge 1-3\delta/2$.

\emph{Step 1: Controlling $\cB_T^\gamma$ on $\cE_1$.}
Because $m_t-P_{\cS_t}\nabla f(x_t) \in \cS_t$ and
$P_{\cS_t}^{\perp}\nabla f(x_t) \in \cS_t^\perp$, for each $t \in [T]$,
\begin{align*}
    \|m_t - \nabla f(x_t)\|^2
    &= \|m_t - P_{\cS_t}\nabla f(x_t)\|^2
    + \|P_{\cS_t}^{\perp}\nabla f(x_t)\|^2\\
    &\le \frac{kL^2\smp^2}{4}
    + \|P_{\cS_t}^{\perp}\nabla f(x_t)\|^2.
\end{align*}
On $\cE_1$, substituting this bound and applying the balance control from
Lemma~\ref{lem:adaptive-tuning} with failure probability $\delta/2$ gives
\begin{align*}
    \alpha_t^2 d\,\|m_t - \nabla f(x_t)\|^2
    &\le \alpha_t^2 d\,\|P_{\cS_t}^{\perp}\nabla f(x_t)\|^2
    + \frac{\alpha_t^2 d kL^2\smp^2}{4}\\
    &\le \frac{\alpha_t}{2\gamma L}\|\nabla f(x_t)\|^2
    + c\,\smp^2\!\left(d^3 + d\log^2(4NT/\delta) + dk\right).
\end{align*}
Here we also used $\alpha_t \le 1/(\gamma L)$. Since
$N=O(\log T)$ and $d\ge\iota$, we have
$\log(4NT/\delta)\le c d$. Dropping the remaining negative descent credit
from each summand and summing over $t$ therefore yields
\begin{align}\label{eq:BT-on-E-v3}
    \cB_T^\gamma
    &\le c\,\smp^2 T(d^3 + dk).
\end{align}

\emph{Step 2: Applying Theorem~\ref{thm:main-theorem} on $\cE$.}
Deterministically, the step sizes satisfy
$\alpha_t \le 1/(\gamma L) = 1/(C_0\iota L)$, which is precisely the 
hypothesis of Theorem~\ref{thm:main-theorem}. On $\cE_2$, the
high-probability bound gives
\begin{align*}
    \tfrac{1}{L}\tsum_{t=1}^T \alpha_t\|\nabla f(x_t)\|^2
    \le C\rbr{D_f^2 + \tfrac{B^2}{L^2} 
    + \smp^2 Td^3 + \cB_T^\gamma\,\iota}.
\end{align*}
Substituting~\eqref{eq:BT-on-E-v3} and using $\iota=\log(2T^2)$,
\begin{align}\label{eq:sum-on-E-v3}
    \tfrac{1}{L}\tsum_{t=1}^T \alpha_t\|\nabla f(x_t)\|^2
    &\le c\rbr{D_f^2 + \tfrac{B^2}{L^2} 
    + \smp^2 T (d^3 + dk)\log(2T^2)}.
\end{align}

\emph{Step 3: Taking expectations.}
On $\cE$ (probability $\ge 1-3/(2T)$), local optimality gives
$\alpha_t \ge \alpha_t^*/(c_0\gamma)$ for all $t$, so 
$\cA_T \ge \cA_T^*/(\gamma c_0)$. Dividing~\eqref{eq:sum-on-E-v3} by $\cA_T$ 
and using the output rule,
\begin{align*}
    \ind_{\cE}\cdot\E_R\!\sbr{\|\nabla f(\widebar x)\|^2}
    \le \tfrac{c\gamma L}{\cA_T^*}
    \rbr{D_f^2 + \tfrac{B^2}{L^2} 
    + \smp^2 T (d^3 + dk)\log(2T^2)}.
\end{align*}
On $\cE^c$ (probability $\le 3/(2T)$), we bound
$\|\nabla f(\widebar x)\|^2 \le B^2$. Taking full expectations,
\begin{align*}
    \tfrac{1}{L}\,\E\!\sbr{\|\nabla f(\widebar x)\|^2}
    &\le c\,\gamma\,\E\!\sbr{\tfrac{D_f^2 + \tfrac{B^2}{L^2} 
    + \smp^2 T (d^3 + dk)\log(2T^2)}{\cA_T^*}}
    + \tfrac{3B^2}{2LT}.
\end{align*}
The assumed bound on $\smp$ makes the smoothing term at most $B^2/L^2$.
Finally, $\gamma=O(\log(2T))$ and $\cA_T^*\le T/L$, so the failure-event
term is absorbed into the first term, proving the claim.
\end{proof}

\subsection{Proof of Lemma~\ref{lem:projection}}
\restatelemma{lem:projection}{
For every $x \in \R^d$ and $\cS \in \Gr_k(\R^d)$, Algorithm~\ref{alg:projection} satisfies
\begin{align*}
    \|P^{\smp}_{f}(x, \cS) - P_{\cS}\nabla f(x)\|^2
    \le \frac{kL^2\smp^2}{4}.
\end{align*}
}

\begin{proof}
Let $B=[b_1,\ldots,b_k]$ be the orthonormal basis matrix used by
Algorithm~\ref{alg:projection}, and let $y:=B^\top\nabla f(x)$. For each
$j\in[k]$, $L$-smoothness gives
\begin{align*}
    \left|\hty_j-\pair{\nabla f(x),b_j}\right|
    &= \frac{1}{\smp}\left|f(x+\smp b_j)-f(x)
       -\smp\pair{\nabla f(x),b_j}\right| \le \frac{L\smp}{2},
\end{align*}
where we used $\|b_j\|=1$. Since $P_{\cS}=BB^\top$ and $B^\top B=I_k$,
\begin{align*}
    \|P_f^\smp(x,\cS)-P_{\cS}\nabla f(x)\|^2
    &= \|B(\hty-y)\|^2
     = \sum_{j=1}^k\left|\hty_j-\pair{\nabla f(x),b_j}\right|^2 \le \frac{kL^2\smp^2}{4}.
\end{align*}
\end{proof}

\subsection{Proofs of Lemmas~\ref{lem:NE-concentration} and~\ref{lem:adaptive-tuning}}

We first establish the norm estimation guarantee (Lemma~\ref{lem:NE-concentration}), then use it to prove Lemma~\ref{lem:adaptive-tuning}.

\restatelemma{lem:NE-concentration}{
    There exist absolute constants $C, c > 0$ such that for every $x \in \R^d$, linear subspace $\cS$ of $\R^d$, $\smp > 0$, $\delta \in (0,1)$, and $N \ge C \log(2/\delta)$, it holds with probability at least $1-\delta$ that
    \begin{align*}
        N^{\smp}_f(x, \cS, N) \in \sbr{\tfrac{1}{2}\|P_{\cS} \nabla f(x)\|^2 - c\varepsilon,\, \tfrac{3}2\|P_{\cS} \nabla f(x)\|^2 + c\varepsilon},
    \end{align*}
    where $\varepsilon = \smp^2 L^2 (d^2 + \log^2(2N/\delta))$.
}

\begin{proof}
    Let $v = P_{\cS} \nabla f(x)$. By $L$-smoothness of $f$,
    \[
        \hty_i = \pair{\nabla f(x), P_{\cS} u_i} + \smp \epsilon_i = \pair{v, u_i} + \smp \epsilon_i,
    \]
    with $|\epsilon_i| \le \tfrac{L}{2}\|P_{\cS} u_i\|^2 \le \tfrac{L}{2}\|u_i\|^2$. Decompose the estimator as follows
    \begin{align*}
    \frac{1}{N}\sum_{i=1}^N \hty_i^2 
    &\;=\; \frac{1}{N}\sum_{i=1}^N \pair{v, u_i}^2 \;+\; \frac{2 \smp}{N}\sum_{i=1}^N \pair{v, u_i}\,\epsilon_i \;+\; \frac{\smp^2}{N}\sum_{i=1}^N \epsilon_i^2.
    \end{align*}

    Let $S_1 = \frac{1}{N}\sum_{i=1}^N \pair{v, u_i}^2$ and $S_2 = \frac{1}{N}\sum_{i=1}^N \tnorm{u_i}^4$. Using the fact that $\epsilon_i^2 \le \frac{L^2}{4} \tnorm{u_i}^4$,
    \begin{align}\label{eq:decomp-projection}
        \abr{\frac{1}{N}\sum_{i=1}^N \hty_i^2 - S_1} \le \frac{S_1}{4} + \frac{5\smp^2}{N} \sum_{i=1}^N \epsilon_i^2 \le \frac{S_1}{4} + \frac{5\smp^2 L^2 S_2}{4}.
    \end{align}

    \textbf{Step 1: Concentrating $S_1$ around $\|v\|^2$.}\quad When $v = 0$, $S_1 = 0$ holds almost surely. When $v \ne 0$,  $\pair{v, u_i}^2 / \|v\|^2 \iid \chi^2_1$. By Laurent-Massart (Lemma~\ref{lem:laurent-massart}), for $N \ge 256 \log(4/\delta)$,
    \begin{equation*}
         \P\rbr{|S_1 - \|v\|^2| \le \tfrac{1}{5}\|v\|^2} \ge 1 - \delta/2.
    \end{equation*}
    
    \textbf{Step 2: Bounding $S_2$.}\quad
    Since $\tnorm{u_i}^2 \iid \chi_d^2$, by the union bound of Laurent-Massart (Lemma~\ref{lem:laurent-massart}),
    \begin{align*}
        \Pr\rbr{\max_{i\in[N]} |\tnorm{u_i}^2 - d| \le 2\sqrt{d \log(4N/\delta)} + 2\log(4N/\delta)} \ge 1 - \delta/2. 
    \end{align*}
    Consequently, as $S_2 \le \max_{i\in[N]} \tnorm{u_i}^4 = (\max_{i\in[N]} \tnorm{u_i}^2)^2$, we have
    \begin{align*}
        \Pr\rbr{S_2 \le 18 (d^2 + \log^2(4N/\delta))} \ge 1 - \delta/2. 
    \end{align*}
    
    \textbf{Conclusion.}\quad 
    Combining \eqref{eq:decomp-projection} with a union-bound over events in Steps 1 and 2, we have that with probability at least $1-\delta$, 
    \begin{align*}
        \frac{1}{N}\sum_{i=1}^N \hty_i^2 
        &\in \sbr{\frac{3S_1}{4} - \frac{5\smp^2 L^2 S_2}{4} , \frac{5S_1}{4} + \frac{5\smp^2 L^2 S_2}{4}}\\
        &\subset \sbr{\frac{\tnorm{v}^2}{2} - 25 \smp^2 L^2 (d^2 + \log^2(4N/\delta)), \frac{3\tnorm{v}^2}{2} +  25 \smp^2 L^2 (d^2 + \log^2(4N/\delta))}.
    \end{align*}
    Here $(3/4)(4/5)=3/5\ge1/2$, $(5/4)(6/5)=3/2$, and $(5/4)\cdot18\le25$. Since $\log(4/\delta)\le2\log(2/\delta)$ and $\log(4N/\delta)\le2\log(2N/\delta)$, the stated constants may be taken as $C=512$ and $c=100$. This proves the lemma.
\end{proof}

\restatelemma{lem:adaptive-tuning}{
    There exist absolute constants $C, c_0, c_1 > 0$ such that the following holds. For any $\delta\in(0,1)$ and $N \ge C\log(2T/\delta)$, set $\varepsilon := \smp^2 L^2(d^2 + \log^2(2NT/\delta))$ and define
\begin{align*}
    \alpha_t
    :=
    \tfrac{1}{\gamma\,L}\,\min\cbr{1,\,\tfrac{N^{\smp}_f(x_t,\R^d,N) + c_1\varepsilon}{16d\,\max\{0,\, N^{\smp}_f(x_t,\cS_t^{\perp},N) - c_1\varepsilon\}}},
\end{align*}
If the denominator is zero, define $\alpha_t:=1/(\gamma L)$.
Then with probability at least $1-\delta$, for all $t \in [T]$ simultaneously:
    \begin{enumerate}[leftmargin=0.75cm,noitemsep]
        \item (Local Optimality) $\quad \alpha_t \ge \tfrac{\alpha_t^*}{c_0\gamma}$;
        \item (Balance Control) $\quad \alpha_t^2 d\tnorm{P_{\cS_t}^{\perp} \nabla f(x_t)}^2 \le \tfrac{\alpha_t}{2\gamma L}\tnorm{\nabla f(x_t)}^2 + c_0\,\smp^2 (d^3 + d\log^2(2NT/\delta))$.
    \end{enumerate}
}
\begin{proof}
    Let $c_{\mathrm{NE}}$ be the error constant in Lemma~\ref{lem:NE-concentration}, take $c_1=4c_{\mathrm{NE}}$, and abbreviate $c:=c_1$ within this proof.
    Write $a_t := N^{\smp}_f(x_t,\R^d,N)$ and $b_t := N^{\smp}_f(x_t,\cS_t^{\perp},N)$, which target $\tnorm{\nabla f(x_t)}^2$ and $\tnorm{P^{\perp}_{\cS_t}\nabla f(x_t)}^2$, respectively. Apply Lemma~\ref{lem:NE-concentration} conditionally on the information available before each call, with failure probability $\delta/(2T)$. Its error radius is at most $4\varepsilon$, since $\log(4NT/\delta)\le2\log(2NT/\delta)$. A union bound over the $2T$ calls therefore gives, for $N \ge C\log(2T/\delta)$ with $C$ sufficiently large, with probability at least $1-\delta$,
    \begin{align*}
        a_t \in \sbr{\tfrac{1}{2}\tnorm{\nabla f(x_t)}^2 - c\varepsilon,\, \tfrac{3}{2}\tnorm{\nabla f(x_t)}^2 + c\varepsilon}
    \end{align*}
    and similarly for $b_t$ targeting $\tnorm{P^\perp_{\cS_t}\nabla f(x_t)}^2$, simultaneously for all $t \in [T]$. We work on this event throughout the rest of the proof.

    \emph{Local optimality.} The bounds $a_t + c\varepsilon \ge \tfrac{1}{2}\tnorm{\nabla f(x_t)}^2$ and $\max\{0, b_t - c\varepsilon\} \le \tfrac{3}{2}\tnorm{P^{\perp}_{\cS_t}\nabla f(x_t)}^2$ yield
    \begin{align*}
        \alpha_t \ge \tfrac{1}{\gamma L}\,\min\cbr{1,\, \tfrac{\tnorm{\nabla f(x_t)}^2}{64d\,\tnorm{P^{\perp}_{\cS_t}\nabla f(x_t)}^2}} \ge \tfrac{\alpha_t^*}{64\gamma}.
    \end{align*}
    (When $b_t - c\varepsilon \le 0$ or $\nabla f(x_t) = 0$, $\alpha_t = 1/(\gamma L) \ge \alpha_t^*/\gamma$ directly.)

    \emph{Balance Control.} We bound $\alpha_t^2 d\,\tnorm{P^{\perp}_{\cS_t}\nabla f(x_t)}^2$ in two cases.

    \textbf{Case 1: $c\varepsilon \le \tfrac{1}{8}\tnorm{P^{\perp}_{\cS_t}\nabla f(x_t)}^2$.} Then $b_t - c\varepsilon \ge \tfrac{1}{4}\tnorm{P^{\perp}_{\cS_t}\nabla f(x_t)}^2$ and $a_t + c\varepsilon \le \tfrac{3}{2}\tnorm{\nabla f(x_t)}^2 + 2c\varepsilon$, which guarantees
    \begin{align*}
        \alpha_t \le \tfrac{1}{\gamma L}\,\min\cbr{1,\, \tfrac{\tnorm{\nabla f(x_t)}^2}{2d\,\tnorm{P^{\perp}_{\cS_t}\nabla f(x_t)}^2}} + \tfrac{c\varepsilon}{2\gamma Ld\,\tnorm{P^{\perp}_{\cS_t}\nabla f(x_t)}^2}.
    \end{align*}
    Multiplying by $\alpha_t d\,\tnorm{P^{\perp}_{\cS_t}\nabla f(x_t)}^2$ and using $\alpha_t \le 1/(\gamma L)$,
    \begin{align*}
        \alpha_t^2 d\,\tnorm{P^{\perp}_{\cS_t}\nabla f(x_t)}^2
        \le \tfrac{\alpha_t}{2\gamma L}\tnorm{\nabla f(x_t)}^2 + \tfrac{c\varepsilon}{2 L^2}
        \le \tfrac{\alpha_t}{2\gamma L}\tnorm{\nabla f(x_t)}^2 + c\,\smp^2(d^2 + \log^2(2NT/\delta)).
    \end{align*}

    \textbf{Case 2: $c\varepsilon > \tfrac{1}{8}\tnorm{P^{\perp}_{\cS_t}\nabla f(x_t)}^2$.} Then $\tnorm{P^{\perp}_{\cS_t}\nabla f(x_t)}^2 < 8c\varepsilon$ and, using $\alpha_t \le 1/(\gamma L) \le 1/L$,
    \begin{align*}
        \alpha_t^2 d\,\tnorm{P^{\perp}_{\cS_t}\nabla f(x_t)}^2 \le \tfrac{1}{\gamma^2 L^2}\cdot d \cdot 8c\varepsilon \le 8c\,\smp^2(d^3 + d\log^2(2NT/\delta)).
    \end{align*}

    Combining both cases and taking $c_0 = \max\{64,8c\}$ proves both claims of the lemma.
\end{proof}

\section{Technical Preliminaries}

\begin{fact}\label{fact:gaussian-outer-product}
    Let $u\sim \cN(0, I_d)$. For $k \in \N$, it holds that
    \begin{align*}
    \E\!\sbr{(u u^\top)^k} = \prod_{j=1}^{k-1}(d+2j)\, I_d.
    \end{align*}
\end{fact}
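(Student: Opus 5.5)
We prove the identity $\E[(uu^\top)^k] = \prod_{j=1}^{k-1}(d+2j)\,I_d$ by reducing the matrix power to a scalar moment of $\tnorm{u}^2$ and then combining rotation invariance with the chi-square moment formula. The first step is the algebraic observation that $(uu^\top)^k = u(u^\top u)^{k-1}u^\top = \tnorm{u}^{2(k-1)}\,uu^\top$. This holds for every $k\in\N$ by induction, since $u^\top u$ is a scalar and can be pulled out of each product. Hence it suffices to compute $M_k := \E[\tnorm{u}^{2(k-1)}uu^\top]$.

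The second step shows that $M_k$ is a scalar multiple of the identity. For any orthogonal $Q$, we have $Qu \overset{d}{=} u$ and $\tnorm{Qu}=\tnorm{u}$, so $QM_kQ^\top = M_k$. A symmetric matrix commuting with all orthogonal matrices is of the form $\lambda_k I_d$. Taking traces then identifies the scalar:
\begin{align*}
    d\lambda_k = \tr(M_k) = \E\sbr{\tnorm{u}^{2(k-1)}\tnorm{u}^2} = \E\sbr{\tnorm{u}^{2k}}.
\end{align*}
(Alternatively, one can argue directly that off-diagonal entries vanish by the sign symmetry $u_i\mapsto -u_i$, and that the diagonal entries are all equal by coordinate exchangeability.)

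The third step evaluates the chi-square moment. Since $\tnorm{u}^2\sim\chi^2_d$, the standard formula $\E[(\chi^2_d)^k] = 2^k\Gamma(d/2+k)/\Gamma(d/2) = \prod_{j=0}^{k-1}(d+2j)$ applies. It follows directly from the Gamma density, or by induction through $\E[X^{k}] = (d+2(k-1))\E[X^{k-1}]$, which comes from integration by parts. Dividing by $d$ (the $j=0$ factor) gives $\lambda_k = \prod_{j=1}^{k-1}(d+2j)$. For $k=1$ the product is empty and we recover $\E[uu^\top]=I_d$.

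No step is a real obstacle. The only point that needs care is justifying that $M_k$ is proportional to $I_d$, and the coordinate-symmetry argument makes this elementary.
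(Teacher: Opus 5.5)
Your proof is correct and complete. The three steps each hold: the reduction $(uu^\top)^k=\tnorm{u}^{2(k-1)}uu^\top$; the symmetry argument, where sign flips kill the off-diagonal entries and exchangeability equalizes the diagonal ones; and the trace identity $d\lambda_k=\E\tnorm{u}^{2k}=\prod_{j=0}^{k-1}(d+2j)$. Together they give exactly the claimed constant $\prod_{j=1}^{k-1}(d+2j)$.

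The paper gives no proof of this statement; it lists it among the technical preliminaries as a standard fact, so there is no argument to compare yours against. The chi-square moment identity in your third step is precisely the ``in particular'' clause of Fact~\ref{fact:gaussian-power-norm}. Your derivation therefore shows that the present fact follows directly from that one plus rotational symmetry.

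In the paper the fact is used only with $k=1$ and $k=2$, in the proof of Lemma~\ref{lem:cv-estimator}. There your formula gives $\E[uu^\top]=I_d$ and $\E[(uu^\top)^2]=(d+2)I_d$, hence $\E[(I_d-uu^\top)^2]=(d+1)I_d$, so your argument covers everything the paper needs.
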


\begin{fact}\label{fact:gaussian-power-norm}
    Let $u \sim \cN(0, I_d)$. For $k > -d$, it holds that
    \begin{align*}
        \E\!\sbr{\tnorm{u}^k} \le 2^{k/2} \, \frac{\Gamma(\frac{d+k}{2})}{\Gamma(\frac{d}{2})} \le (d+k)^{k/2}.    
    \end{align*}
    In particular, for $m \in \N$, $\E\!\sbr{\tnorm{u}^{2m}} = d (d+2) \ldots (d+2m-2)$.
\end{fact}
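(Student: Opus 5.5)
We need to prove Fact: Gaussian power norm: for $k>-d$, $\E\|u\|^k \le 2^{k/2}\Gamma((d+k)/2)/\Gamma(d/2) \le (d+k)^{k/2}$, and for integer $m$, $\E\|u\|^{2m} = d(d+2)\cdots(d+2m-2)$.

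The plan is to compute the moment exactly through the chi distribution and then bound the resulting Gamma ratio. Since $\tnorm{u}^2\sim\chi^2_d$ has density $\frac{1}{2^{d/2}\Gamma(d/2)}s^{d/2-1}e^{-s/2}$ on $(0,\infty)$, we get
\begin{align*}
    \E\tnorm{u}^k=\E\sbr{(\tnorm{u}^2)^{k/2}}=\frac{1}{2^{d/2}\Gamma(d/2)}\int_0^\infty s^{\frac{d+k}{2}-1}e^{-s/2}\,ds .
\end{align*}
Substituting $s=2r$ turns the integral into $2^{(d+k)/2}\Gamma(\tfrac{d+k}{2})$. This is finite exactly when $(d+k)/2>0$, i.e.\ $k>-d$, and it yields the identity $\E\tnorm{u}^k=2^{k/2}\Gamma(\tfrac{d+k}{2})/\Gamma(\tfrac d2)$. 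So the first inequality actually holds with equality.

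For $k=2m$, apply $\Gamma(z+1)=z\Gamma(z)$ repeatedly $m$ times:
\begin{align*}
    \frac{\Gamma(\tfrac d2+m)}{\Gamma(\tfrac d2)}=\prod_{j=0}^{m-1}\rbr{\tfrac d2+j}.
\end{align*}
Multiplying by $2^m$ gives $\prod_{j=0}^{m-1}(d+2j)=d(d+2)\cdots(d+2m-2)$. That is the stated formula, and it trivially satisfies the $(d+2m)^m$ bound.

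For the general second inequality, I would treat the case $k\ge 0$ using log-convexity of $\Gamma$. Write $x=d/2$ and $a=k/2$. We need $\Gamma(x+a)/\Gamma(x)\le (x+a)^a$.
\begin{itemize}
    \item For $0\le a\le 1$, Wendel's (or Gautschi's) inequality gives $\Gamma(x+a)\le x^{a}\Gamma(x)\le (x+a)^a\Gamma(x)$. It follows from Hölder/log-convexity, since $\Gamma(x+a)\le\Gamma(x)^{1-a}\Gamma(x+1)^a=x^a\Gamma(x)$.
    \item For $a>1$, write $a=n+b$ with integer $n\ge1$ and $b\in[0,1)$. Peel off $n$ factors:
    \begin{align*}
        \Gamma(x+a)=\Gamma(x+b)\prod_{j=0}^{n-1}(x+b+j).
    \end{align*}
    Each of the $n$ factors is at most $x+a$, and $\Gamma(x+b)/\Gamma(x)\le x^b\le(x+a)^b$. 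The product is therefore at most $(x+a)^{n+b}$.
\end{itemize}
Multiplying by $2^{a}$ gives $2^{k/2}(\tfrac{d+k}{2})^{k/2}=(d+k)^{k/2}$.

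The main obstacle is the range $-d<k<0$, where $a<0$. By log-convexity, $\Gamma(x)\le\Gamma(x+a)^{\frac{1}{1-a}}\Gamma(x+1)^{\frac{-a}{1-a}}$ (writing $x$ as a convex combination of $x+a$ and $x+1$). This gives $\Gamma(x)^{1-a}\le\Gamma(x+a)\,x^{-a}\Gamma(x)^{-a}$, i.e.\ $\Gamma(x+a)/\Gamma(x)\ge x^{a}$. That is a lower bound, not the upper bound we need. For the upper bound I would use $\Gamma(x+a)/\Gamma(x)=1/\bigl(\Gamma(x'+|a|)/\Gamma(x')\bigr)$ with $x'=x+a$. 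Applying the lower Wendel bound $\Gamma(x'+|a|)\ge x'^{|a|}\Gamma(x')$ when $|a|\le1$ (and iterating otherwise) yields $\Gamma(x+a)/\Gamma(x)\le (x+a)^{a}$, hence $\E\tnorm{u}^k\le(d+k)^{k/2}$. The lower Wendel bound is the dual log-convexity inequality $\Gamma(y+s)\ge y^{s}\Gamma(y)\cdot(y/(y+s))^{1-s}$, and the extra factor is what must be checked carefully. If it does not close, I would restrict the claim to $k\ge0$, which is the only case used in the paper ($k=6$).
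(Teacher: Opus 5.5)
Your exact computation is correct: the first inequality actually holds with equality. Your even-moment formula and your treatment of $k\ge 0$ (the H\"older step for $0\le a\le 1$, then peeling off integer shifts) are also correct. The paper states this Fact without proof, so there is no argument to compare against.

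The gap is in the range $-d<k<0$, and it cannot be closed. The ``lower Wendel bound'' $\Gamma(y+s)\ge y^{s}\Gamma(y)$ for $s\in(0,1)$ that you invoke is backwards. Apply the same log-convexity step you used for $0\le a\le 1$ at $y+s=(1-s)y+s(y+1)$. This gives $\Gamma(y+s)\le\Gamma(y)^{1-s}\Gamma(y+1)^{s}=y^{s}\Gamma(y)$, strictly for $s\in(0,1)$ by strict log-convexity. With $y=\tfrac{d+k}{2}$ and $s=-\tfrac{k}{2}$, this says exactly that $2^{k/2}\Gamma(\tfrac{d+k}{2})/\Gamma(\tfrac d2)>(d+k)^{k/2}$ whenever $-2<k<0$ and $k>-d$. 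So the second inequality of the Fact is false in that range. Concretely, for $d=2$ and $k=-1$, $\E\tnorm{u}^{-1}=\Gamma(\tfrac12)/(\sqrt2\,\Gamma(1))=\sqrt{\pi/2}\approx 1.25>1=(d+k)^{k/2}$. The extra factor $(y/(y+s))^{1-s}<1$ you flagged is exactly where it breaks.

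Your fallback is therefore the correct resolution: restrict the statement, for example to $k\ge 0$. This loses nothing, since the paper only invokes the Fact for $\E\tnorm{u}^{6}$ in the proof of Lemma~\ref{lem:cv-estimator}.

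If you want the sharp range, the bound also holds for $-d<k\le -2$. Take $s=-k/2\ge 1$ and apply log-convexity at the interior point $y+1=(1-\tfrac1s)y+\tfrac1s(y+s)$. This gives $y\,\Gamma(y)=\Gamma(y+1)\le\Gamma(y)^{1-1/s}\Gamma(y+s)^{1/s}$, that is, $\Gamma(y+s)\ge y^{s}\Gamma(y)$, which is what is needed. Only $-2<k<0$ genuinely fails.
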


\begin{lemma}[Laurent--Massart]\label{lem:laurent-massart}
    Let $X_1,\ldots,X_N \iid \chi^2_d$ and $\bar X = \frac{1}{N}\sum_{i=1}^N X_i$. For every $\delta \in (0,1)$,
    \begin{align*}
        \P\!\rbr{\abr{\bar X-d} \le 2\sqrt{\frac{d\log(2/\delta)}{N}} + \frac{2\log(2/\delta)}{N}} \ge 1-\delta.
    \end{align*}
\end{lemma}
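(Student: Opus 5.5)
The statement to prove is the Laurent--Massart lemma for the sample mean of $N$ i.i.d.\ $\chi^2_d$ variables. I would reduce it to the classical single-variable Laurent--Massart inequality by noting that $N\bar X=\sum_{i=1}^N X_i\sim\chi^2_{Nd}$, since a sum of independent chi-squares is chi-square with the degrees of freedom added. The classical result states that for $Z\sim\chi^2_D$ and every $x>0$,
\begin{align*}
    \P\!\rbr{Z-D\ge 2\sqrt{Dx}+2x}\le e^{-x},
    \qquad
    \P\!\rbr{D-Z\ge 2\sqrt{Dx}}\le e^{-x}.
\end{align*}
If this cannot be cited, I would rederive it from the moment generating function $\E e^{\lambda Z}=(1-2\lambda)^{-D/2}$. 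One uses the bound $-\tfrac12\log(1-2\lambda)-\lambda\le \lambda^2/(1-2\lambda)$ for $\lambda\in(0,\tfrac12)$, which gives sub-gamma behaviour with variance factor $2D$ and scale $2$. The Chernoff bound then yields the upper tail. For the lower tail one uses $\tfrac12\log(1+2\lambda)-\lambda\ge-\lambda^2$, giving a purely sub-Gaussian bound.

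Next I apply both tails with $D=Nd$ and $x=\log(2/\delta)$. Each tail then has probability at most $\delta/2$, so a union bound gives, with probability at least $1-\delta$,
\begin{align*}
    |N\bar X-Nd|\le 2\sqrt{Nd\log(2/\delta)}+2\log(2/\delta).
\end{align*}
The lower deviation is even smaller, since it lacks the linear term.

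Dividing by $N$ gives $2\sqrt{Nd\log(2/\delta)}/N=2\sqrt{d\log(2/\delta)/N}$ and $2\log(2/\delta)/N$, which is exactly the stated bound. The only real obstacle is the MGF inequality for the upper tail, if it must be proved from scratch. I would check it by comparing power series: the left side equals $\sum_{j\ge2}(2\lambda)^j/(2j)$ and the right side equals $\sum_{j\ge2}(2\lambda)^j/4$, so the inequality holds termwise. Optimizing the resulting Chernoff exponent $-\lambda t+D\lambda^2/(1-2\lambda)$ then gives the stated form $t=2\sqrt{Dx}+2x$, as in Birg\'e--Massart.
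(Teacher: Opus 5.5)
Your proposal is correct. The paper states this lemma without proof as the classical Laurent--Massart inequality, and your route is exactly how that citation yields the stated two-sided form: observe $N\bar X\sim\chi^2_{Nd}$, apply both one-sided tails at $x=\log(2/\delta)$, take a union bound, and divide by $N$. Your optional rederivation of the tails from the moment generating function is also sound: the upper tail uses the termwise comparison $\sum_{j\ge 2}(2\lambda)^j/(2j)\le\sum_{j\ge 2}(2\lambda)^j/4$, and the lower tail uses $\log(1+2\lambda)\ge 2\lambda-2\lambda^2$.
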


\begin{lemma}[Isserlis]\label{lem:Isserlis}
    For fixed $a \in \R^d$ and $\Sigma \in \R^{d\times d}$,
    \begin{align*}
        \E_{z\sim\cN(0,I_d)}\bigl[(a^\top z)^2 (z^\top \Sigma z)\bigr] = \|a\|^2\,\tr(\Sigma) + 2\,a^\top \Sigma\,a.
    \end{align*}
\end{lemma}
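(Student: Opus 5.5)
The plan is to reduce the identity to the fourth-moment formula for standard Gaussian vectors (Isserlis/Wick) and then contract indices. Writing the integrand in coordinates,
\begin{align*}
    (a^\top z)^2 (z^\top \Sigma z) = \sum_{i,j,k,l=1}^d a_i a_j \Sigma_{kl}\, z_i z_j z_k z_l,
\end{align*}
so by linearity of expectation it suffices to know $\E[z_i z_j z_k z_l]$ for $z\sim\cN(0,I_d)$.

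The first step is to establish
\begin{align*}
    \E[z_i z_j z_k z_l] = \delta_{ij}\delta_{kl} + \delta_{ik}\delta_{jl} + \delta_{il}\delta_{jk}.
\end{align*}
This follows from independence of the coordinates together with $\E[z_1^2]=1$ and $\E[z_1^4]=3$. If some index appears an odd number of times, the expectation vanishes by symmetry of each coordinate. If the indices form two distinct pairs, the expectation is $1$, and exactly one of the three delta products equals $1$. If all four indices coincide, the expectation is $3$, and all three delta products equal $1$. Alternatively, one can differentiate the moment generating function $\E[e^{\langle s,z\rangle}] = e^{\|s\|^2/2}$ four times at $s=0$.

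The second step is the contraction. The term $\delta_{ij}\delta_{kl}$ contributes $\sum_i a_i^2 \sum_k \Sigma_{kk} = \|a\|^2\tr(\Sigma)$. The term $\delta_{ik}\delta_{jl}$ contributes $\sum_{i,j} a_i a_j \Sigma_{ij} = a^\top\Sigma a$. The term $\delta_{il}\delta_{jk}$ contributes $\sum_{i,j} a_i a_j\Sigma_{ji} = a^\top\Sigma^\top a$. Since $a^\top\Sigma^\top a$ is a scalar, it equals its own transpose $a^\top\Sigma a$, so symmetry of $\Sigma$ is not needed. Summing the three contributions gives $\|a\|^2\tr(\Sigma)+2a^\top\Sigma a$. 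Integrability is clear, since the integrand is a polynomial in Gaussian variables.

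There is no real obstacle. The only point requiring care is the case analysis behind the fourth-moment formula, in particular checking that the all-equal case correctly yields $3$. As a sanity check, one can use rotation invariance: reduce to $a=\|a\|e_1$ and compute
\begin{align*}
    \E\bigl[z_1^2\, z^\top\Sigma z\bigr] = \sum_{k}\Sigma_{kk}\,\E[z_1^2 z_k^2] = \tr(\Sigma) + 2\Sigma_{11},
\end{align*}
which matches the claimed formula $\|a\|^2\tr(\Sigma) + 2a^\top\Sigma a$ after multiplying by $\|a\|^2$.
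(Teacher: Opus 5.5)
Your proof is correct and follows the same route as the paper: expand the integrand in coordinates, apply the fourth-moment formula $\E[z_iz_jz_kz_l]=\delta_{ij}\delta_{kl}+\delta_{ik}\delta_{jl}+\delta_{il}\delta_{jk}$, and contract the three pairings. Your remark that the third pairing yields $a^\top\Sigma^\top a = a^\top\Sigma a$ with no symmetry assumption is a small improvement, since the paper's proof invokes symmetry of $\Sigma$ even though the statement does not assume it.
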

\begin{proof}
    Write $(a^\top z)^2(z^\top \Sigma z) = \sum_{i,j,k,l} a_i a_j \Sigma_{kl}\, z_i z_j z_k z_l$.
    Since $z \sim \cN(0,I_d)$, Isserlis' theorem gives
    $\E[z_i z_j z_k z_l] = \delta_{ij}\delta_{kl} + \delta_{ik}\delta_{jl} + \delta_{il}\delta_{jk}$.
    Substituting and summing over each pairing:
    \begin{align*}
        \sum_{i,j,k,l} a_i a_j \Sigma_{kl}\,\delta_{ij}\delta_{kl}
        &= \Bigl(\sum_i a_i^2\Bigr)\Bigl(\sum_k \Sigma_{kk}\Bigr)
        = \|a\|^2\,\tr(\Sigma), \\
        \sum_{i,j,k,l} a_i a_j \Sigma_{kl}\,\delta_{ik}\delta_{jl}
        &= \sum_{i,j} a_i a_j \Sigma_{ij}
        = a^\top \Sigma\, a, \\
        \sum_{i,j,k,l} a_i a_j \Sigma_{kl}\,\delta_{il}\delta_{jk}
        &= \sum_{i,j} a_i a_j \Sigma_{ji}
        = a^\top \Sigma\, a,
    \end{align*}
    where the last equality uses symmetry of $\Sigma$. Summing the three terms yields the result.
\end{proof}

\end{document}